\documentclass[reprint,amsmath,amssymb,aps,pra,groupedaddress,floatfix]{revtex4-2}
\usepackage{tikz}
\usepackage{pgfplots}
\usepackage{amsthm}
\usepackage{algorithmic}
\usepackage{graphicx}
\usepackage{hyperref}
\usepackage{booktabs}
\usepackage{xcolor}
\usepackage{bm}
\usepackage{mathtools}
\usepackage{microtype}
\usepackage{float} 
\usetikzlibrary{
  arrows.meta,
  shapes.geometric,
  positioning,
  calc,
  decorations.pathreplacing,
  backgrounds,
  fit,
  patterns,
  3d,
  plotmarks
}

\pgfplotsset{compat=1.18}

\definecolor{set2blue}{RGB}{102,194,165}
\definecolor{set2orange}{RGB}{252,141,98}
\definecolor{set2green}{RGB}{141,160,203}
\definecolor{set2pink}{RGB}{231,138,195}
\definecolor{set2lime}{RGB}{166,216,84}
\definecolor{set2yellow}{RGB}{255,217,47}
\definecolor{set2tan}{RGB}{229,196,148}
\definecolor{set2grey}{RGB}{179,179,179}

\definecolor{rsgncol}{RGB}{70,130,180}
\definecolor{ssacol}{RGB}{230,120,50}
\definecolor{couplecol}{RGB}{60,160,80}
\definecolor{fastcol}{RGB}{200,50,50}
\definecolor{slowcol}{RGB}{50,80,180}

\newtheorem{theorem}{Theorem}
\newtheorem{lemma}[theorem]{Lemma}
\newtheorem{proposition}[theorem]{Proposition}

\newtheorem{assumption}{Assumption}
\newtheorem{remark}{Remark}

\makeatletter
\newcommand{\inlinetablecaption}[1]{%
  \refstepcounter{table}%
  \vspace{\abovecaptionskip}%
  \@makecaption{\tablename~\thetable}{#1}%
  \vspace{\belowcaptionskip}%
}
\newcommand{\inlinefigurecaption}[1]{%
  \refstepcounter{figure}%
  \vspace{\abovecaptionskip}%
  \@makecaption{\figurename~\thefigure}{#1}%
  \vspace{\belowcaptionskip}%
}
\makeatother

\newcommand{\R}{\mathbb{R}}

\newcommand{\Ind}{\mathbb{1}}
\newcommand{\norm}[1]{\left\|#1\right\|}

\newcommand{\Bball}{\mathbb{B}^d}
\newcommand{\dH}{d_{\mathcal{H}}}

\begin{document}

\title{Hebbian-Oscillatory Co-Learning}

\author{Hasi Hays}
\email{hasih@uark.edu}
\affiliation{Department of Chemical Engineering, University of Arkansas, Fayetteville, AR 72701, USA}
\date{\today}

\begin{abstract}
We introduce Hebbian-Oscillatory Co-Learning (HOC-L), a unified two-timescale dynamical framework for joint structural plasticity and phase synchronization in bio-inspired sparse neural architectures. HOC-L couples our two recent frameworks: the hyperbolic sparse geometry of Resonant Sparse Geometry Networks (RSGN), which employs Poincar\'{e} ball embeddings with Hebbian-driven dynamic sparsity, and the oscillator-based attention of Selective Synchronization Attention (SSA), which replaces dot-product attention with Kuramoto-type phase-locking dynamics. The key mechanism is synchronization-gated plasticity: the macroscopic order parameter $r(t)$ of the oscillator ensemble gates Hebbian structural updates, so that connectivity consolidation occurs only when sufficient phase coherence signals a meaningful computational pattern. We prove convergence of the joint system to a stable equilibrium via a composite Lyapunov function and derive explicit timescale separation bounds. The resulting architecture achieves $O(n \cdot k)$ complexity with $k \ll n$, preserving the sparsity of both parent frameworks. Numerical simulations confirm the theoretical predictions, demonstrating emergent cluster-aligned connectivity and monotonic Lyapunov decrease.
\end{abstract}

\keywords{Hebbian learning, Kuramoto oscillators, phase synchronization, sparse networks, hyperbolic geometry, two-timescale learning, structural plasticity, bio-inspired architectures}

\maketitle

\section{Introduction}
\label{sec:introduction}

The capacity of biological neural systems to learn, adapt, and compute efficiently arises from a sophisticated interplay between multiple mechanisms operating on different temporal scales \cite{buzsaki2004}. At the slow timescale of hours to days, synaptic connections strengthen or weaken according to Hebbian plasticity rules: neurons that fire together wire together \cite{hebb1949}. At the fast timescale of milliseconds, oscillatory synchronization among neural populations coordinates information flow and gates communication between brain regions \cite{fries2005, singer1999}. These two processes are not independent; rather, accumulating neuroscientific evidence indicates that oscillatory coherence directly modulates the conditions under which structural plasticity occurs \cite{fell2011, markram2011}. Phase-locked neural assemblies preferentially strengthen their mutual synaptic connections, while desynchronized populations undergo synaptic weakening. Despite this well-established biological principle, contemporary artificial neural network architectures treat structural learning and dynamic coordination as separate computational concerns. Standard deep learning \cite{hays2026encyclopedia} optimizes a fixed network topology through gradient descent, with no mechanism for activity-dependent structural rewiring. Sparse network methods such as the lottery ticket hypothesis \cite{frankle2019} and adaptive sparse training \cite{mocanu2018} address the structural dimension but lack any oscillatory coordination mechanism. Conversely, oscillator-based neural computation models \cite{kuramoto1984, hoppensteadt1997} capture temporal coordination but typically operate on fixed connectivity graphs without plasticity.

In our recent work, we have developed two complementary frameworks that each address one side of this gap. Resonant Sparse Geometry Networks (RSGN) \cite{rsgn2025} introduced a brain-inspired architecture that embeds neural representations in the Poincar\'{e} ball model of hyperbolic space, enabling naturally hierarchical and sparse connectivity patterns. Crucially, RSGN employs a two-timescale learning scheme in which fast gradient-based optimization handles immediate task objectives, while slow Hebbian updates reshape the structural connectivity of the sparse graph. This input-dependent dynamic sparsity yields architectures where the active computational subnetwork adapts to each input, achieving substantial efficiency gains without sacrificing expressiveness.

In a companion study, we proposed Selective Synchronization Attention (SSA) \cite{ssa2026}, which replaces the standard dot-product self-attention mechanism \cite{hays2026attention} in transformers with a Kuramoto oscillator model. In SSA, each token is assigned a learnable intrinsic frequency, and attention weights emerge from the degree of phase-locking between token oscillators rather than from inner products in key-query space. The macroscopic order parameter $r(t)$ of the oscillator ensemble provides a natural, differentiable sparsity mechanism: only sufficiently synchronized token pairs receive non-negligible attention weight, while desynchronized pairs are effectively masked. Together, these two frameworks address complementary aspects of the same fundamental biological phenomenon. RSGN captures the structural plasticity that operates on the slow timescale, namely how connectivity patterns emerge from correlated activity, while SSA captures the oscillatory dynamics that operate on the fast timescale, namely how phase coherence selects which information flows through existing connections. Their unification is not merely an engineering convenience but a desirable property for any model that aspires to capture the full richness of neural computation.

We introduce \textbf{Hebbian-Oscillatory Co-Learning (HOC-L)}, a unified two-timescale dynamical system that formally couples structural plasticity with phase synchronization. The core insight of HOC-L is that the order parameter $r(t)$ from SSA's oscillator dynamics serves as a natural gate for RSGN's Hebbian structural updates. When a sufficient subset of oscillators achieves phase coherence ($r(t) > r_c$ for a critical threshold $r_c$), this signals that the network has identified a meaningful computational pattern, and the corresponding structural connections are strengthened via a Hebbian rule. This creates a virtuous cycle: synchronization identifies important pathways, Hebbian plasticity consolidates them structurally, and the resulting structural changes facilitate future synchronization.

The principal contributions of this paper are as follows:
\begin{enumerate}
    \item We formulate the first unified dynamical system coupling Kuramoto-type phase synchronization with synchronization-gated Hebbian structural plasticity, grounded in the frameworks of RSGN \cite{rsgn2025} and SSA \cite{ssa2026}.
    \item We prove convergence of the joint two-timescale system to a stable equilibrium using a composite Lyapunov function, establishing explicit conditions on the timescale separation ratio $\tau_{\text{fast}} / \tau_{\text{slow}}$.
    \item We derive the complete HOC-L architecture with $O(n \cdot k)$ computational complexity where $k \ll n$, preserving the sparsity benefits of both parent frameworks.
    \item We propose comprehensive experimental protocols for validating the theoretical predictions across multiple benchmark domains.
\end{enumerate}

We develop the complete theoretical framework, present convergence proofs and stability analysis, describe the concrete architecture with supporting numerical simulations, and outline experimental protocols for empirical validation.

\section{Related Work}
\label{sec:related}

HOC-L draws upon and extends several established research directions spanning neuroscience-inspired learning rules, oscillatory computation, geometric representations, and sparse architectures. The principle that correlated neural activity drives synaptic strengthening was first articulated by Hebb \cite{hebb1949} and subsequently refined through the discovery of spike-timing-dependent plasticity \cite{bi2001, caporale2008}. In the machine learning context, Hebbian principles have informed unsupervised feature learning, competitive learning, and self-organizing maps \cite{hebb1949, bi2001, markram2011}. In our recent work, RSGN \cite{rsgn2025}, we demonstrated that Hebbian updates can be integrated into a gradient-trained sparse architecture operating in hyperbolic space, enabling input-dependent structural adaptation at a slow timescale while fast gradient updates handle task-specific optimization. This two-timescale design principle provides one of the two foundational pillars of the present work.

The second pillar originates from oscillatory neural computation. The Kuramoto model \cite{kuramoto1984} provides the canonical mathematical framework for synchronization in coupled oscillator populations, and its application to neuroscience has elucidated how gamma-band oscillations coordinate neural communication \cite{fries2005, buzsaki2004} and how phase coherence supports memory formation \cite{fell2011}. More broadly, the integration of continuous dynamical systems into neural architectures has been advanced by Neural ODEs \cite{chen2018neural}, which parameterize hidden state evolution as ordinary differential equations; HOC-L shares the continuous-time perspective but focuses specifically on oscillatory phase dynamics rather than general-purpose ODE solvers. The theoretical analysis of Kuramoto dynamics, including the characterization of phase transitions via the macroscopic order parameter, has been extensively developed \cite{strogatz2000, acebron2005, breakspear2010}. In our companion work, SSA \cite{ssa2026}, we brought these ideas into the transformer architecture by replacing dot-product attention with Kuramoto-type synchronization weights, using the order parameter $r$ as a coherence measure and a learned compatibility kernel $C(\omega_i, \omega_j)$ to modulate coupling strength. The present work extends SSA's oscillatory dynamics by coupling them to structural plasticity through a synchronization gate.

The geometric substrate for HOC-L's structural component builds on advances in graph neural networks and hyperbolic representations. Graph neural networks \cite{scarselli2009, kipf2017, velickovic2018, hays2026raggnn} operate on relational structures but typically assume fixed graph topology. Recent work has begun to address this limitation: differentiable graph structure learning methods \cite{franceschi2019learning} and neural relational inference \cite{kipf2018neural} learn graph topology jointly with node representations, while dynamic graph neural networks such as EvolveGCN \cite{pareja2020evolvegcn} adapt graph convolution parameters over time. HOC-L differs from these approaches by coupling topology evolution to an explicit oscillatory coherence signal rather than learning structure purely from task gradients. Hyperbolic embeddings \cite{nickel2017, ganea2018, chami2019, hays2025hierarchical} offer natural representations for hierarchical data by exploiting the exponential volume growth of hyperbolic space. In RSGN \cite{rsgn2025}, we unified these ideas by embedding sparse neural graph structures in the Poincar\'{e} ball, where geodesic distances determine connection strength and the graph topology evolves through Hebbian plasticity. HOC-L inherits this hyperbolic structural backbone while augmenting it with oscillatory coordination. The formal analysis of HOC-L relies on the mathematical theory of two-timescale stochastic approximation \cite{borkar1997, borkar2008}, which provides convergence guarantees for systems where coupled dynamical processes evolve at different rates. In such systems, the fast process equilibrates quasi-statically with respect to the slowly varying parameter, enabling separate analysis of each timescale. This framework, which has found applications in reinforcement learning and game theory, accommodates HOC-L's coupling of fast oscillatory dynamics with slow Hebbian plasticity.

Finally, HOC-L connects to the broader literature on sparse neural architectures. The lottery ticket hypothesis \cite{frankle2019} established that dense networks contain sparse subnetworks capable of matching full network performance when trained in isolation, adaptive sparse training methods \cite{mocanu2018} dynamically evolve network topology through periodic pruning and regrowth, and biologically inspired deep rewiring \cite{bellec2018deep} maintains constant connectivity during training by stochastically adding and removing connections. In RSGN \cite{rsgn2025}, we advanced this line by introducing input-dependent sparsity driven by hyperbolic geometry, where each input activates a different sparse subgraph determined by geodesic neighborhoods. HOC-L preserves this dynamic sparsity while adding an oscillatory coordination layer that further refines which connections are computationally active at any given moment.

\section{Theoretical Framework}
\label{sec:framework}

This section establishes the mathematical foundations upon which HOC-L is constructed, drawing from hyperbolic geometry, oscillator theory, and Hebbian plasticity.

\subsection{Hyperbolic Geometry and the Poincar\'{e} Ball}

Following our RSGN framework \cite{rsgn2025} and the foundational work on Poincar\'{e} embeddings \cite{nickel2017}, we operate in the Poincar\'{e} ball model $\Bball = \{\mathbf{x} \in \R^d : \norm{\mathbf{x}} < 1\}$ equipped with the Riemannian metric tensor $g_{\mathbf{x}} = \lambda_{\mathbf{x}}^2 g_E$ where $\lambda_{\mathbf{x}} = 2/(1 - \norm{\mathbf{x}}^2)$ is the conformal factor and $g_E$ is the Euclidean metric. The geodesic distance between two points $\mathbf{x}, \mathbf{y} \in \Bball$ is given by:
\begin{equation}
\label{eq:hyperbolic_distance}
\dH(\mathbf{x}, \mathbf{y}) = \operatorname{arcosh}\!\left(1 + \frac{2\norm{\mathbf{x} - \mathbf{y}}^2}{(1 - \norm{\mathbf{x}}^2)(1 - \norm{\mathbf{y}}^2)}\right).
\end{equation}

This distance metric is central to RSGN's sparse graph construction: nodes $i$ and $j$ are connected in the sparse computational graph if and only if $\dH(\mathbf{x}_i, \mathbf{x}_j) < \delta$ for a learned threshold $\delta$, yielding an input-dependent sparse neighborhood of size $k \ll n$ for each node \cite{rsgn2025}.

\subsection{Kuramoto Oscillator Dynamics}

Following our SSA framework \cite{ssa2026} and the classical Kuramoto framework \cite{kuramoto1984}, we associate each computational unit $i \in \{1, \ldots, N\}$ with an oscillator having phase $\theta_i(t) \in [0, 2\pi)$ and intrinsic frequency $\omega_i$. The coupled dynamics governing phase evolution are:
\begin{equation}
\label{eq:kuramoto}
\frac{d\theta_i}{dt} = \omega_i + K \cdot r \sum_{j=1}^{N} C(\omega_i, \omega_j) \sin(\theta_j - \theta_i),
\end{equation}
where $K > 0$ is the global coupling strength, $C(\omega_i, \omega_j)$ is a learned compatibility kernel that modulates pairwise coupling based on intrinsic frequencies, and $r(t)$ is the macroscopic order parameter defined as below. Following SSA \cite{ssa2026}, the compatibility kernel takes the form $C(\omega_i, \omega_j) = \exp\!\bigl(-\|\omega_i - \omega_j\|^2 / (2\sigma_C^2)\bigr)$ where $\sigma_C > 0$ is a bandwidth parameter; in the uniform-frequency limit ($\omega_i = \omega_j$) this reduces to $C = 1$, recovering the classical Kuramoto model. The macroscopic order parameter is:
\begin{equation}
\label{eq:order_parameter}
r(t) e^{i\psi(t)} = \frac{1}{N} \sum_{j=1}^{N} e^{i\theta_j(t)},
\end{equation}
where $\psi(t)$ is the mean phase. The magnitude $r \in [0, 1]$ quantifies the degree of global phase coherence: $r \approx 0$ indicates incoherence while $r \approx 1$ indicates full synchronization.

As we established in SSA \cite{ssa2026}, the synchronization-based attention weight between units $i$ and $j$ is:
\begin{equation}
\label{eq:ssa_weight}
A_{ij} = \max\!\big(0,\; K \cdot r \cdot C(\omega_i, \omega_j) - |\omega_i - \omega_j|\big).
\end{equation}

This formulation replaces the softmax-normalized dot-product attention of standard transformers \cite{vaswani2017} with a mechanism grounded in oscillatory coherence, providing inherent sparsity since $A_{ij} = 0$ whenever the frequency mismatch exceeds the coupling-weighted coherence.

\subsection{Hebbian Plasticity with Synchronization Gating}

Classical Hebbian learning \cite{hebb1949} prescribes that the synaptic weight $W_{ij}$ between neurons $i$ and $j$ increases when both neurons are simultaneously active. We augment this rule with a synchronization gate inspired by the biological observation that phase coherence modulates plasticity \cite{fell2011}:
\begin{equation}
\label{eq:hebbian_gated}
\Delta W_{ij} = \eta \cdot x_i \cdot x_j \cdot \Ind[r(t) > r_c],
\end{equation}
where $\eta > 0$ is the Hebbian learning rate, $x_i$ and $x_j$ are the activations of units $i$ and $j$, $\Ind[\cdot]$ is the indicator function, and $r_c \in (0, 1)$ is a critical synchronization threshold. This gate ensures that structural consolidation occurs only when the oscillator ensemble has achieved sufficient coherence, preventing spurious plasticity during transient or incoherent states.

\subsection{Two-Timescale Separation}

The mathematical foundation for analyzing coupled systems evolving at different rates is provided by two-timescale stochastic approximation theory \cite{borkar1997, borkar2008}. We define two characteristic timescales:
\begin{itemize}
    \item \textbf{Fast timescale} $\tau_{\text{fast}}$: The rate at which oscillator phases $\theta_i(t)$ evolve toward synchronization and gradient-based parameter updates occur.
    \item \textbf{Slow timescale} $\tau_{\text{slow}}$: The rate at which Hebbian structural updates modify the weight matrix $\mathbf{W}$.
\end{itemize}

The separation condition $\tau_{\text{fast}} \ll \tau_{\text{slow}}$ ensures that the oscillatory dynamics reach a quasi-stationary distribution before each Hebbian update fires, a condition we formalize in Section~\ref{sec:analysis}. For the theoretical convergence analysis in Section~\ref{sec:analysis}, we additionally require that the intrinsic frequencies $\omega_i$ are centered (zero mean) with narrowly distributed spread relative to the coupling strength. This regularity condition, stated formally as Assumption~\ref{ass:frequency}, enables the analysis to proceed in a co-rotating reference frame.

\section{The Hebbian-Oscillatory Co-Learning Model}
\label{sec:model}

We now present the complete HOC-L dynamical system, which couples the structural plasticity framework of our RSGN \cite{rsgn2025} with the oscillatory attention mechanism of our SSA \cite{ssa2026} through synchronization-gated Hebbian learning.

\subsection{Joint Dynamical System}

The HOC-L system is defined by two coupled differential equations operating at separated timescales:

\textbf{Fast dynamics (phase synchronization and gradient descent):}
\begin{equation}
\label{eq:fast}
\frac{d\theta_i}{dt} = \omega_i + K \cdot r \sum_{j=1}^{N} C(\omega_i, \omega_j) \sin(\theta_j - \theta_i),
\end{equation}

\textbf{Slow dynamics (Hebbian structural plasticity):}
\begin{equation}
\label{eq:slow}
\frac{dW_{ij}}{dt} = -\gamma W_{ij} + \eta \, x_i \, x_j \cdot \Ind[r(t) > r_c],
\end{equation}
where $\gamma > 0$ is a weight decay coefficient that prevents unbounded growth, and the activations $x_i$ are computed through the RSGN hyperbolic sparse graph using the current weight matrix $\mathbf{W}$, and SSA attention weights $A_{ij}$.

The coupling between timescales is bidirectional:
\begin{enumerate}
    \item \textbf{Synchronization gates plasticity:} The indicator $\Ind[r(t) > r_c]$ in Eq.~\eqref{eq:slow} ensures that Hebbian updates only fire when the fast oscillatory process has achieved sufficient coherence.
    \item \textbf{Structure modulates synchronization:} The weight matrix $\mathbf{W}$ determines the sparse graph topology over which both activations $x_i$ propagate, and oscillatory coupling occurs, thereby influencing the set of oscillators that can synchronize.
\end{enumerate}

\subsection{Synchronization-Gated Plasticity Mechanism}

The plasticity gate constitutes the central novel mechanism of HOC-L. We define the gating function more precisely as a smooth approximation to the indicator:
\begin{equation}
\label{eq:gate}
G(r) = \sigma\!\big(\beta(r - r_c)\big) = \frac{1}{1 + e^{-\beta(r - r_c)}},
\end{equation}
where $\beta > 0$ controls the sharpness of the gate and $\sigma$ denotes the logistic sigmoid. For $\beta \to \infty$, this recovers the hard indicator $\Ind[r > r_c]$, while finite $\beta$ provides a differentiable relaxation amenable to gradient-based optimization. The smoothed slow dynamics become:
\begin{equation}
\label{eq:slow_smooth}
\frac{dW_{ij}}{dt} = -\gamma W_{ij} + \eta \, x_i \, x_j \cdot G\!\big(r(t)\big).
\end{equation}

\subsection{Integration with RSGN Hyperbolic Structure}

HOC-L inherits the Poincar\'{e} ball embedding structure from RSGN \cite{rsgn2025}. Each unit $i$ maintains a hyperbolic embedding $\mathbf{z}_i \in \Bball$ that determines its position in the sparse computational graph. The sparse neighborhood of unit $i$ is defined as:
\begin{equation}
\label{eq:neighborhood}
\mathcal{N}_i = \{j : \dH(\mathbf{z}_i, \mathbf{z}_j) < \delta\},
\end{equation}
where $\delta$ is a learned or fixed distance threshold. The hyperbolic embeddings evolve via Riemannian gradient descent on the Poincar\'{e} ball, using the exponential map for parameter updates as described in RSGN \cite{rsgn2025} and \cite{nickel2017, ganea2018}:
\begin{equation}
\label{eq:riemannian_update}
\mathbf{z}_i \leftarrow \exp_{\mathbf{z}_i}\!\left(-\alpha_z \cdot \operatorname{grad}_{\mathbf{z}_i} \mathcal{L}\right),
\end{equation}
where $\alpha_z$ is the learning rate for hyperbolic embeddings and $\operatorname{grad}$ denotes the Riemannian gradient.

\subsection{Integration with SSA Oscillatory Attention}

Within each sparse neighborhood $\mathcal{N}_i$, HOC-L employs the SSA \cite{ssa2026} oscillatory attention mechanism. Rather than computing attention over all $N$ units (which would be $O(N^2)$), oscillatory coupling is restricted to the sparse neighborhood:
\begin{equation}
\label{eq:sparse_attention}
\frac{d\theta_i}{dt} = \omega_i + K \cdot r_{\mathcal{N}_i} \sum_{j \in \mathcal{N}_i} C(\omega_i, \omega_j) \sin(\theta_j - \theta_i),
\end{equation}
where $r_{\mathcal{N}_i}$ is the local order parameter computed over the neighborhood $\mathcal{N}_i$, defined as:
\begin{equation}
\label{eq:local_order}
r_{\mathcal{N}_i} = \left|\frac{1}{|\mathcal{N}_i|} \sum_{j \in \mathcal{N}_i} e^{i\theta_j}\right|.
\end{equation}
The use of local rather than global order parameters in the attention computation is intentional: it enables each neighborhood to independently assess its own coherence level, which is essential for architectures with heterogeneous local structure. The global order parameter $r$ (Eq.~\ref{eq:order_parameter}) is reserved for the plasticity gate (Section~\ref{sec:model}). The local attention weights are:
\begin{equation}
\label{eq:local_attention}
A_{ij}^{\text{local}} \!=\! \begin{cases} \max\!\bigl(0,\, Kr_{\mathcal{N}_i} C(\omega_i, \omega_j) \\
\qquad\qquad - |\omega_i - \omega_j|\bigr) & \text{if } j \in \mathcal{N}_i, \\ 0 & \text{otherwise}. \end{cases}
\end{equation}

This restriction reduces the oscillatory computation from $O(N^2)$ to $O(N \cdot k)$ where $k = |\mathcal{N}_i|$ is the sparse neighborhood size, with $k \ll N$.

\subsection{Complete Forward Pass}

The HOC-L forward pass for a given input proceeds as follows:
\begin{enumerate}
    \item \textbf{Hyperbolic embedding:} Map input features to the Poincar\'{e} ball via a learned projection $\phi : \R^{d_{\text{in}}} \to \Bball$, implemented as a linear map followed by the exponential map at the origin: $\mathbf{z}_i = \exp_\mathbf{0}(\mathbf{P}\mathbf{x}_i) \in \Bball$, where $\mathbf{P} \in \R^{d \times d_{\text{in}}}$ is a trainable projection matrix.
    \item \textbf{Sparse graph construction:} Determine the sparse neighborhood $\mathcal{N}_i$ for each unit using hyperbolic distance (Eq.~\ref{eq:neighborhood}), following the RSGN \cite{rsgn2025} protocol.
    \item \textbf{Oscillatory attention:} Evolve oscillator phases within each sparse neighborhood using Eq.~\eqref{eq:sparse_attention} for $T_{\text{sync}}$ integration steps, then compute local attention weights via Eq.~\eqref{eq:local_attention} as in SSA \cite{ssa2026}.
    \item \textbf{Message passing:} Aggregate information within each neighborhood using attention-weighted message passing: $\mathbf{h}_i = \sigma_a\!\left(\sum_{j \in \mathcal{N}_i} A_{ij}^{\text{local}} \, W_{ij} \, \mathbf{x}_j\right)$, where $\sigma_a$ denotes a pointwise nonlinearity (e.g., ReLU or GELU).
    \item \textbf{Order parameter computation:} Compute the global order parameter $r(t)$ via Eq.~\eqref{eq:order_parameter}.
    \item \textbf{Gated Hebbian update (if applicable):} If $r(t) > r_c$, apply the Hebbian structural update (Eq.~\ref{eq:slow_smooth}) to consolidate the currently active connectivity pattern.
\end{enumerate}

\section{Mathematical Analysis}
\label{sec:analysis}

We now establish the theoretical properties of the HOC-L system, including convergence guarantees and stability conditions. Our analysis leverages the two-timescale stochastic approximation framework \cite{borkar1997, borkar2008} and Lyapunov stability theory \cite{khalil2002}.

\subsection{Assumptions}

We require the following regularity conditions:

\begin{assumption}[Bounded activations]
\label{ass:bounded}
The activations $x_i$ are uniformly bounded: $|x_i| \leq M$ for all $i$ and all inputs, where $M > 0$ is a known constant.
\end{assumption}

\begin{assumption}[Lipschitz continuity]
\label{ass:lipschitz}
The compatibility kernel $C(\omega_i, \omega_j)$ is Lipschitz continuous in both arguments with constant $L_C > 0$, and the gating function $G(r)$ is Lipschitz with constant $L_G = \beta/4$.
\end{assumption}

\begin{assumption}[Timescale separation]
\label{ass:timescale}
The fast and slow learning rates satisfy $\alpha_{\text{fast}} / \alpha_{\text{slow}} \to \infty$ as training progresses, with $\alpha_{\text{fast}} \to 0$, $\alpha_{\text{slow}} \to 0$, $\sum_t \alpha_{\text{fast}}(t) = \infty$, and $\sum_t \alpha_{\text{slow}}(t) = \infty$. A concrete schedule satisfying these conditions is $\alpha_{\text{fast}}(t) = a/(t+1)^p$ and $\alpha_{\text{slow}}(t) = b/(t+1)^q$ with $1/2 < p < q \leq 1$ and $a, b > 0$; for example, $p = 2/3$ and $q = 1$ yield $\alpha_{\text{fast}}/\alpha_{\text{slow}} = (a/b)(t+1)^{1/3} \to \infty$.
\end{assumption}

\begin{assumption}[Centered frequencies]
\label{ass:frequency}
The intrinsic frequencies are centered: $(1/N)\sum_{i=1}^{N} \omega_i = 0$. The convergence analysis is conducted in the co-rotating frame. For non-identical frequencies with spread $\Delta\omega = \max_i |\omega_i|$, the convergence results hold approximately when $\Delta\omega \ll K \cdot r_c \cdot C_{\min}$, where $C_{\min} = \min_{i,j} C(\omega_i, \omega_j) > 0$.
\end{assumption}

\subsection{Lyapunov Stability Function}

We construct a composite Lyapunov function that captures the energy of both the oscillatory and structural subsystems:
\begin{equation}
\label{eq:lyapunov}
V(\mathbf{W}, \bm{\theta}) = \underbrace{-\frac{K}{2N} \!\sum_{i,j} \cos(\theta_i - \theta_j)}_{\text{Oscillatory energy } V_\theta} + \underbrace{\frac{\lambda}{2} \norm{\mathbf{W}}_F^2}_{\text{Structural reg.\ } V_W}\!,
\end{equation}
where $\lambda > 0$ is a regularization parameter and $\norm{\mathbf{W}}_F$ denotes the Frobenius norm. The oscillatory energy term $V_\theta$ is minimized when all phases are aligned (full synchronization), while the structural term $V_W$ penalizes excessive weight magnitudes, balancing expressiveness with parsimony.

\begin{lemma}[Lyapunov decrease along fast dynamics]
\label{lem:fast_decrease}
Under Assumptions~\ref{ass:lipschitz} and~\ref{ass:frequency}, for fixed $\mathbf{W}$, the oscillatory energy $V_\theta(\bm{\theta})$ is non-increasing along trajectories of Eq.~\eqref{eq:fast} provided $K > 0$ and $C(\omega_i, \omega_j) \geq 0$ for all pairs $(i,j)$.
\end{lemma}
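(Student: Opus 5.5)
The plan is to differentiate $V_\theta$ along Eq.~\eqref{eq:fast} with $\mathbf{W}$ frozen, and to recognize the fast dynamics, in the co-rotating frame of Assumption~\ref{ass:frequency}, as a (rescaled) gradient flow of $V_\theta$. The key identity is
\begin{equation*}
\frac{\partial V_\theta}{\partial \theta_i} = \frac{K}{N}\sum_{j}\sin(\theta_i-\theta_j) = K\, r\,\sin(\theta_i-\psi),
\end{equation*}
which follows from Eq.~\eqref{eq:order_parameter} by taking the imaginary part of $e^{-i\theta_i}\sum_j e^{i\theta_j} = N r e^{i(\psi-\theta_i)}$. The symmetric pair sum in $V_\theta$ contributes each ordered pair twice, which cancels the factor $\tfrac12$.

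The argument then splits into a reference case and a general case.

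\textbf{Reference case.} Assume identical frequencies, so that in the co-rotating frame $\omega_i=0$ and $C\equiv 1$. Then Eq.~\eqref{eq:fast} reads
\begin{equation*}
\dot\theta_i = K r\sum_j \sin(\theta_j-\theta_i) = -N r \sin(\theta_i-\psi)\cdot K r = -\frac{N r}{K}\,\frac{\partial V_\theta}{\partial \theta_i}\cdot K,
\end{equation*}
so $\dot\theta_i$ is a nonnegative multiple of $-\partial_i V_\theta$. The chain rule then gives
\begin{equation*}
\dot V_\theta = -N K r^2\sum_i \sin^2(\theta_i-\psi) \le 0.
\end{equation*}
This uses only $K>0$ and $r\ge 0$. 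Equality holds exactly on $\{r=0\}\cup\{\theta_i-\psi\in\{0,\pi\}\ \forall i\}$, which are the equilibria of the flow.

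\textbf{General case.} For a nonnegative, nonconstant kernel, I would write $C_{ij}=\bar C + \tilde C_{ij}$, where $\bar C$ is the mean and, by the Lipschitz bound of Assumption~\ref{ass:lipschitz}, $|\tilde C_{ij}|\le L_C\,|\omega_i-\omega_j|\le 2L_C\Delta\omega$. I would also keep the drift terms $\omega_i$ explicitly. This gives
\begin{equation*}
\dot V_\theta = -\bar C N K r^2\sum_i\sin^2(\theta_i-\psi) + K r\sum_i \omega_i\sin(\theta_i-\psi) + E,
\end{equation*}
where $E$ collects the contributions of $\tilde C$. The second and third terms should be bounded by $O\bigl(\Delta\omega\, K r\, \sqrt{N}\,\|s\|\bigr)$ and $O\bigl(L_C\Delta\omega\, K^2 r^2 N \|s\|^2\bigr)$ respectively, where $s_i=\sin(\theta_i-\psi)$. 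Under $\Delta\omega\ll K r_c C_{\min}$, the leading negative-definite term should then dominate whenever $\|s\|$ is not already $O(\Delta\omega/(K r C_{\min}))$. This would give non-increase up to an $O(\Delta\omega)$ neighborhood of the phase-locked set, which matches the ``approximately'' qualifier in Assumption~\ref{ass:frequency}.

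\textbf{Main obstacle.} The hard step is exactly this heterogeneous-kernel case. With $C$ nonuniform, Eq.~\eqref{eq:fast} is no longer the gradient flow of the uniform energy $V_\theta$, and the cross term $\sum_i s_i\sum_j \tilde C_{ij}\sin(\theta_j-\theta_i)$ has no sign a priori. Hypotheses $K>0$ and $C\ge 0$ alone do not make $\dot V_\theta\le 0$ hold literally for arbitrary $\omega$.

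My first attempt would be to establish exact monotonicity of the kernel-weighted energy
\begin{equation*}
V_\theta^C=-\frac{K}{2N}\sum_{i,j}C_{ij}\cos(\theta_i-\theta_j).
\end{equation*}
Because $C$ is symmetric (it is a Gaussian in $\omega_i-\omega_j$), the $\omega=0$ dynamics are a positively rescaled gradient flow of this energy, with rescaling factor $N r\ge 0$. That yields $\dot V_\theta^C = -\tfrac{N r}{K}\cdot\|\nabla V_\theta^C\|^2\cdot K\le 0$ exactly, using only $K>0$ and $C\ge0$. I would then transfer the conclusion to $V_\theta$ via $\|V_\theta^C - \bar C\, V_\theta\|\le K N L_C\Delta\omega$. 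I expect the stated lemma to hold exactly only in the reference case, and to hold in the perturbative sense for general frequencies.
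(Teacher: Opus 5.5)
Your reference-case argument is the paper's proof. The paper also passes to the co-rotating frame, sets $\omega_i=0$ so the kernel becomes the constant $C_0=C(0,0)$, and identifies $\dot\theta_i=-NC_0 r\,\partial V_\theta/\partial\theta_i$ to get $\dot V_\theta=-NC_0 r\|\nabla_\theta V_\theta\|^2\le 0$; it treats non-identical frequencies only perturbatively and silently freezes the kernel at $C_0$, so your handling of $\tilde C_{ij}$ and $V_\theta^C$ is a more careful version of that step. One slip: since $\|\nabla_\theta V_\theta\|^2=K^2r^2\sum_i\sin^2(\theta_i-\psi)$, the dissipation is $\dot V_\theta=-NK^2r^3\sum_i\sin^2(\theta_i-\psi)$ (consistent with $V_\theta=-\tfrac{KN}{2}r^2$), not $-NKr^2\sum_i\sin^2(\theta_i-\psi)$, which leaves the sign intact but means your general-case dominance thresholds must be redone with the extra factor $Kr$.
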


\begin{proof}
Working in the co-rotating frame of Assumption~\ref{ass:frequency}, we set $\omega_i = 0$ without loss of generality. In this frame, the compatibility kernel evaluates to a constant $C_0 = C(0,0) > 0$, and the phase dynamics (Eq.~\eqref{eq:fast}) reduce to $d\theta_i/dt = KC_0 r \sum_{j} \sin(\theta_j - \theta_i)$. The gradient of $V_\theta$ with respect to $\theta_i$ is:
\begin{equation}
\frac{\partial V_\theta}{\partial \theta_i} = \frac{K}{N} \sum_{j=1}^{N} \sin(\theta_i - \theta_j),
\end{equation}
obtained by differentiating Eq.~\eqref{eq:lyapunov} and exploiting the antisymmetry $\sin(\theta_i - \theta_j) = -\sin(\theta_j - \theta_i)$ to combine the $(i,j)$ and $(j,i)$ terms. Comparing with the dynamics, we identify the gradient flow structure:
\begin{equation}
\frac{d\theta_i}{dt} = -NC_0 r \cdot \frac{\partial V_\theta}{\partial \theta_i}.
\end{equation}
Therefore:
\begin{equation}
\begin{split}
\frac{dV_\theta}{dt} &= \sum_{i} \frac{\partial V_\theta}{\partial \theta_i} \cdot \frac{d\theta_i}{dt} \\
&= -NC_0 r \sum_{i} \left(\frac{\partial V_\theta}{\partial \theta_i}\right)^{\!2} \\
&= -NC_0 r \, \|\nabla_\theta V_\theta\|^2 \leq 0,
\end{split}
\end{equation}
since $C_0 > 0$ by construction in SSA \cite{ssa2026}, $r \geq 0$, and $N > 0$. Equality holds only at critical points where $\nabla_\theta V_\theta = \mathbf{0}$, corresponding to phase-locked equilibria. For the general case with non-identical but narrowly distributed frequencies ($\Delta\omega \ll KC_0 r_c$ per Assumption~\ref{ass:frequency}), the frequency perturbation contributes an $O(\Delta\omega \cdot \|\nabla_\theta V_\theta\|)$ term to $dV_\theta/dt$, which is dominated by the quadratic gradient term for $\|\nabla_\theta V_\theta\| > \Delta\omega/(NC_0 r)$, ensuring eventual decrease.
\end{proof}

\begin{lemma}[Ultimate boundedness of slow dynamics]
\label{lem:slow_decrease}
Under Assumptions~\ref{ass:bounded} and~\ref{ass:timescale}, for the quasi-stationary oscillatory state $\bm{\theta}^*(\mathbf{W})$, the weight trajectory $\mathbf{W}(t)$ governed by Eq.~\eqref{eq:slow_smooth} is ultimately bounded: $\|\mathbf{W}(t)\|_F \leq \eta M^2 N / \gamma$ for all sufficiently large $t$. Moreover, $V_W(\mathbf{W})$ is strictly decreasing outside the compact set $\mathcal{W} = \{\mathbf{W} : \|\mathbf{W}\|_F \leq \eta M^2 N / \gamma\}$.
\end{lemma}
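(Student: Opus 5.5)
The plan is to treat $V_W(\mathbf{W}) = \frac{\lambda}{2}\norm{\mathbf{W}}_F^2$ as a Lyapunov function for Eq.~\eqref{eq:slow_smooth}. We regard the forcing term $\eta\, x_i x_j\, G(r(t))$ as a bounded exogenous input, since the oscillatory state sits at (or tracks) $\bm{\theta}^*(\mathbf{W})$ under Assumption~\ref{ass:timescale}. The key facts are $0 < G(r) < 1$ for the logistic gate and $|x_i x_j| \le M^2$ by Assumption~\ref{ass:bounded}. Together they give the entrywise bound $|\eta x_i x_j G| \le \eta M^2$. Collecting the forcing into a matrix $\mathbf{B}(t)$ with entries $B_{ij} = \eta x_i x_j G(r)$, we get $\norm{\mathbf{B}}_F \le \eta M^2 N$, because there are $N^2$ entries each of size at most $\eta M^2$. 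Note that $\mathbf{B} = \eta G\, \mathbf{x}\mathbf{x}^\top$ is rank one, so the sharper bound $\norm{\mathbf{B}}_F \le \eta G \norm{\mathbf{x}}^2 \le \eta M^2 N$ gives the same constant.

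Next we differentiate along trajectories:
\begin{equation*}
\frac{dV_W}{dt} = \lambda \sum_{i,j} W_{ij}\,\frac{dW_{ij}}{dt} = -\lambda\gamma \norm{\mathbf{W}}_F^2 + \lambda \langle \mathbf{W}, \mathbf{B}\rangle_F .
\end{equation*}
By Cauchy--Schwarz,
\begin{equation*}
\frac{dV_W}{dt} \le -\lambda \norm{\mathbf{W}}_F \bigl(\gamma \norm{\mathbf{W}}_F - \eta M^2 N\bigr).
\end{equation*}
This is strictly negative whenever $\norm{\mathbf{W}}_F > \eta M^2 N/\gamma$, which proves the second claim: $V_W$ strictly decreases outside $\mathcal{W}$.

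For ultimate boundedness we set $u(t) = \norm{\mathbf{W}(t)}_F$. Away from $u = 0$ the same computation gives $\dot u \le -\gamma u + \eta M^2 N$. The comparison lemma then yields
\begin{equation*}
u(t) \le e^{-\gamma t} u(0) + \frac{\eta M^2 N}{\gamma}\bigl(1 - e^{-\gamma t}\bigr).
\end{equation*}
Alternatively, one can use the variation-of-constants formula $\mathbf{W}(t) = e^{-\gamma t}\mathbf{W}(0) + \int_0^t e^{-\gamma(t-s)}\mathbf{B}(s)\,ds$, which avoids differentiability issues at $\mathbf{W}=0$. So $u(t)$ enters any $\epsilon$-enlargement of $\mathcal{W}$ in finite time. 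Forward invariance of $\mathcal{W}$ follows from $\dot V_W \le 0$ on its boundary.

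The main obstacle is a subtlety in the phrase ``for all sufficiently large $t$''. The bound $\norm{\mathbf{W}(t)}_F \le \eta M^2 N/\gamma$ is attained only asymptotically, as $\limsup$, unless the trajectory starts inside $\mathcal{W}$ or $G$ stays bounded strictly below $1$. I would handle this in one of two ways. The first is to state the conclusion as $\limsup_{t\to\infty}\norm{\mathbf{W}(t)}_F \le \eta M^2 N/\gamma$, equivalently entry into every neighborhood of $\mathcal{W}$ in finite time. The second is to use $G(r) \le G(1) < 1$, since $r \le 1$, to obtain a strictly smaller radius $\eta M^2 N G(1)/\gamma$; then exact entry into $\mathcal{W}$ occurs in finite time, with an explicit entry time $t^\ast = \gamma^{-1}\log\bigl(u(0)/(\eta M^2 N (1-G(1))/\gamma)\bigr)$ when $u(0)$ exceeds the bound.

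A second, minor point concerns the activations. The $x_i$ depend on $\mathbf{W}$ through the forward pass, but this dependence is harmless: Assumption~\ref{ass:bounded} bounds them uniformly, so no Lipschitz argument in $\mathbf{W}$ is needed. Likewise, the timescale assumption is used only to justify evaluating $G$ at the quasi-stationary $r(\bm{\theta}^*(\mathbf{W}))$, and the bound holds for any $r \in [0,1]$ regardless.
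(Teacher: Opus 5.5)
Your proposal is correct and follows the paper's proof: you differentiate $V_W$ along Eq.~\eqref{eq:slow_smooth}, bound $\langle \mathbf{W}, \mathbf{x}\mathbf{x}^\top\rangle_F$ by Cauchy--Schwarz using $\norm{\mathbf{x}\mathbf{x}^\top}_F = \norm{\mathbf{x}}^2 \le NM^2$ and $G \le 1$, and read off strict decrease of $V_W$ outside $\mathcal{W}$. Your extra comparison-lemma (or variation-of-constants) step is also correct, as is your remark that finite-time entry into $\mathcal{W}$ needs either a $\limsup$ formulation or the strict bound $G(r) \le G(1) < 1$; together they supply the ``for all sufficiently large $t$'' claim, which the paper only infers from the sign of $dV_W/dt$ and which does not follow from that sign alone, since the decay rate vanishes at the boundary of $\mathcal{W}$.
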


\begin{proof}
Differentiating $V_W$ along the slow dynamics:
\begin{align}
\frac{dV_W}{dt} &= \lambda \sum_{i,j} W_{ij} \frac{dW_{ij}}{dt} \nonumber \\
&= \lambda \sum_{i,j} W_{ij} \big(-\gamma W_{ij} + \eta \, x_i x_j \, G(r)\big) \nonumber \\
&= -\lambda \gamma \norm{\mathbf{W}}_F^2 + \lambda \eta \, G(r) \sum_{i,j} W_{ij} x_i x_j.
\end{align}
By the Cauchy--Schwarz inequality in the Frobenius inner product and Assumption~\ref{ass:bounded}:
\begin{equation}
\begin{split}
\left|\sum_{i,j} W_{ij} x_i x_j\right| &= \left|\langle \mathbf{W}, \mathbf{x}\mathbf{x}^\top \rangle_F\right| \\
&\leq \norm{\mathbf{W}}_F \cdot \norm{\mathbf{x}}^2 \\
&\leq N M^2 \norm{\mathbf{W}}_F,
\end{split}
\end{equation}
where we used $\norm{\mathbf{x}\mathbf{x}^\top}_F = \norm{\mathbf{x}}^2 \leq NM^2$.
Since $G(r) \leq 1$, we have:
\begin{equation}
\frac{dV_W}{dt} \leq -\lambda \gamma \norm{\mathbf{W}}_F^2 + \lambda \eta M^2 N \norm{\mathbf{W}}_F.
\end{equation}
This is strictly negative whenever $\norm{\mathbf{W}}_F > \eta M^2 N / \gamma$, establishing that weights remain bounded and $V_W$ eventually decreases.
\end{proof}

\subsection{Main Convergence Theorem}

\begin{theorem}[Local convergence of HOC-L]
\label{thm:convergence}
Under Assumptions~\ref{ass:bounded}--\ref{ass:frequency}, and for initial conditions $(\mathbf{W}(0), \bm{\theta}(0))$ within a basin of attraction of a phase-locked equilibrium (i.e., $K > K_c$ and the initial phase configuration lies in the synchronization basin), the joint HOC-L system $(\mathbf{W}(t), \bm{\theta}(t))$ governed by Eqs.~\eqref{eq:fast} and~\eqref{eq:slow_smooth} converges almost surely to a locally stable equilibrium $(\mathbf{W}^*, \bm{\theta}^*)$ where $V(\mathbf{W}^*, \bm{\theta}^*)$ is a local minimum of the composite Lyapunov function defined in Eq.~\eqref{eq:lyapunov}. The convergence is local rather than global: the system reaches one of potentially multiple stable equilibria depending on initial conditions.
\end{theorem}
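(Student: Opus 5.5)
The argument follows the standard two-timescale template of \cite{borkar1997, borkar2008}: analyze the fast phase dynamics with the weights frozen, show that the slow weight dynamics track a well-defined limiting ODE driven by the quasi-stationary phase state, and then combine the two with the composite function $V$ of Eq.~\eqref{eq:lyapunov}. The plan has four steps.

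\textbf{Step 1: frozen-weight phase dynamics.} Fix $\mathbf{W}$ and work in the co-rotating frame of Assumption~\ref{ass:frequency}. Lemma~\ref{lem:fast_decrease} shows that Eq.~\eqref{eq:fast} is, up to the positive factor $NC_0 r$, a gradient flow of $V_\theta$. On the synchronization basin $r$ stays bounded away from zero (since $K>K_c$), and $V_\theta$ is an analytic function on the compact torus. LaSalle's invariance principle, or more sharply the {\L}ojasiewicz gradient inequality, then gives convergence of $\bm{\theta}(t)$ to a single phase-locked point $\bm{\theta}^*(\mathbf{W})$. Phase-locked states in this basin are nondegenerate (hyperbolic) modulo the global rotation. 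The implicit function theorem then makes $\mathbf{W}\mapsto\bm{\theta}^*(\mathbf{W})$ locally Lipschitz, which is the regularity Borkar's theorem requires. We also record that $r^*=r(\bm{\theta}^*)>r_c$, so the gate $G(r^*)$ is bounded below by a positive constant.

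\textbf{Step 2: slow tracking.} Write the discretized updates with step sizes $\alpha_{\text{fast}},\alpha_{\text{slow}}$. Assumption~\ref{ass:timescale} provides the ratio and summability conditions. Lemma~\ref{lem:slow_decrease} confines $\mathbf{W}$ to the compact set $\mathcal{W}$, and Assumption~\ref{ass:bounded} bounds $\bm{\theta}$, so the iterates are almost surely bounded. The Lipschitz constants of Assumption~\ref{ass:lipschitz}, including $L_G=\beta/4$, make both vector fields Lipschitz on this compact set. Borkar's two-timescale theorem then yields $\bm{\theta}_t-\bm{\theta}^*(\mathbf{W}_t)\to 0$ almost surely. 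It also shows that $\mathbf{W}_t$ tracks the limiting ODE $\dot W_{ij}=-\gamma W_{ij}+\eta\,x_i x_j\,G(r(\bm{\theta}^*(\mathbf{W})))$. Here $x_i$ is evaluated at the current $\mathbf{W}$ and at the attention weights induced by $\bm{\theta}^*(\mathbf{W})$.

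\textbf{Step 3: slow equilibrium.} The right-hand side of the limiting ODE is $-\gamma\mathbf{W}+F(\mathbf{W})$ with $F$ bounded by $\eta M^2N$ in Frobenius norm. If $F$ has Lipschitz constant $L_F<\gamma$ on $\mathcal{W}$, the field is strongly monotone: $\gamma$ dominates the feedback through $x$, $r^*$ and $G$. A contraction argument in the Frobenius norm then gives a unique, exponentially stable fixed point $\mathbf{W}^*=F(\mathbf{W}^*)/\gamma$ inside $\mathcal{W}$. Convergence of the pair follows: $\mathbf{W}_t\to\mathbf{W}^*$ and $\bm{\theta}_t\to\bm{\theta}^*(\mathbf{W}^*)$. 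For local minimality, note that $\bm{\theta}^*$ minimizes $V_\theta$ on its locked orbit, because stable locked states of a gradient flow are local minima of the potential. In the $\mathbf{W}$ direction, $V_W$ restricted to the invariant neighborhood decreases along trajectories and $\mathbf{W}^*$ is the attracting point. $V$ is separable at fixed $\bm{\theta}$, and its Hessian at $(\mathbf{W}^*,\bm{\theta}^*)$ is block-diagonal up to the coupling through $\bm{\theta}^*(\mathbf{W})$. Together these give a strict local minimum of $V$ modulo rotation. Because several locked states may exist, the limit depends on which basin contains the initial condition, which is exactly the local character claimed in Theorem~\ref{thm:convergence}.

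\textbf{Main obstacle.} The difficulty lies in Step 3, and in the fact that $V$ is not a genuine Lyapunov function for the joint flow. The Hebbian drift $F$ is not the gradient of $V_W$, and the $O(\Delta\omega)$ residual of Lemma~\ref{lem:fast_decrease} is nonzero for non-identical frequencies. I would first try to impose the small-gain condition $L_F<\gamma$, with $L_F$ bounded via $\eta M$, $L_G$, $L_C$ and the sensitivity of $\bm{\theta}^*$. Under that condition I would use $\tilde V=V_W$ measured relative to the fixed point, $\frac{\lambda}{2}\|\mathbf{W}-\mathbf{W}^*\|_F^2$, plus a scaled $V_\theta$ as the true Lyapunov function. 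The claim that $V$ itself attains a local minimum would then be interpreted in the quasi-static, approximate sense licensed by Assumption~\ref{ass:frequency}.
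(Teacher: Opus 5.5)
Your Steps 1--2 match the paper's Stages 1--2 and its verification of Borkar's three conditions in Stage 3, with more regularity detail ({\L}ojasiewicz, implicit-function Lipschitz dependence of $\bm{\theta}^*$) than the paper supplies. The divergence, and the gap, is in Step 3.

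\textbf{The added small-gain hypothesis.} Your convergence of $\mathbf{W}$ rests on $L_F<\gamma$. This is not implied by Assumptions~\ref{ass:bounded}--\ref{ass:frequency}, since nothing there controls how $x_i$ or $r^*(\mathbf{W})$ vary with $\mathbf{W}$. So as written you are proving the theorem under an extra hypothesis. The paper imposes no such condition. It asserts in Stage~2 that $V_W$ forces convergence inside $\mathcal{W}$, then finishes with LaSalle on $\mathcal{W}\times[0,2\pi)^N$. But Lemma~\ref{lem:slow_decrease} only gives $dV_W/dt<0$ \emph{outside} $\mathcal{W}$, and inside it $V_W$ can increase. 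For example, with $\mathbf{x}$ fixed and $\mathbf{W}(0)=0$, the slow trajectory is $\mathbf{W}(t)=(\eta G(r^*)/\gamma)(1-e^{-\gamma t})\mathbf{x}\mathbf{x}^\top$, along which $V_W$ rises from $0$. LaSalle's hypothesis $\dot V\le 0$ therefore fails there. Your diagnosis in the last paragraph is correct. Some additional structure is genuinely needed to close this step: your small-gain condition, or $\mathbf{x}$ independent of $\mathbf{W}$ (then the slow ODE is linear and converges trivially). You should simply state explicitly that it is an added assumption.

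\textbf{The local-minimality argument fails outright.} At any slow equilibrium $\nabla_{\mathbf{W}}V=\lambda\mathbf{W}^*$. Now $\mathbf{W}^*=F(\mathbf{W}^*)/\gamma$ is nonzero whenever $\mathbf{x}\neq 0$, because the sigmoid gate is strictly positive; you yourself record $G(r^*)$ bounded below. So $(\mathbf{W}^*,\bm{\theta}^*)$ is not even a critical point of $V$: shrinking $\mathbf{W}$ toward $0$ with $\bm{\theta}^*$ fixed strictly lowers $V$.
\begin{itemize}
\item Being the attractor of a flow that is not the gradient flow of $V_W$ says nothing about minimizing $V_W$.
\item The block-diagonal Hessian remark is moot without a vanishing gradient.
\item The ``approximate sense licensed by Assumption~\ref{ass:frequency}'' does not help either. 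The obstruction has size $\lambda\|F(\mathbf{W}^*)\|_F/\gamma$ and has nothing to do with $\Delta\omega$.
\end{itemize}
What your argument actually delivers is this: $\bm{\theta}^*$ is a local minimizer of $V_\theta$ modulo rotation, and $\mathbf{W}^*$ is exponentially stable. Equivalently, the equilibrium is a strict minimizer of your shifted function $\frac{\lambda}{2}\|\mathbf{W}-\mathbf{W}^*\|_F^2+cV_\theta$. The literal claim that $V(\mathbf{W}^*,\bm{\theta}^*)$ is a local minimum of Eq.~\eqref{eq:lyapunov} cannot hold unless $\mathbf{W}^*=0$. The paper's concluding LaSalle step meets the same obstruction, since LaSalle yields convergence to an invariant set, not minimality of $V$. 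You should state the conclusion in the corrected form rather than argue for the original.
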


\begin{proof}
The proof proceeds in three stages following the two-timescale framework of Borkar \cite{borkar1997, borkar2008}.

\textbf{Stage 1: Fast timescale analysis.} On the fast timescale, $\mathbf{W}$ is approximately constant (quasi-static). By Lemma~\ref{lem:fast_decrease}, the oscillatory dynamics converge to an equilibrium $\bm{\theta}^*(\mathbf{W})$ that is a function of the current weight matrix. For $K > K_c$, the classical Kuramoto theory \cite{strogatz2000, acebron2005} guarantees the existence of locally stable phase-locked solutions; the equilibrium $\bm{\theta}^*(\mathbf{W})$ is locally (not globally) attracting, with the basin of attraction determined by the coupling strength and frequency distribution.

\textbf{Stage 2: Slow timescale analysis.} On the slow timescale, the fast dynamics have equilibrated, so we substitute $\bm{\theta} = \bm{\theta}^*(\mathbf{W})$ and $r = r^*(\mathbf{W})$ into the slow dynamics Eq.~\eqref{eq:slow_smooth}. By Lemma~\ref{lem:slow_decrease}, the resulting autonomous ODE for $\mathbf{W}$ has a globally attracting compact set $\{\mathbf{W} : \norm{\mathbf{W}}_F \leq \eta M^2 N / \gamma\}$. Within this set, the Lyapunov function $V_W$ ensures convergence to an equilibrium $\mathbf{W}^*$.

\textbf{Stage 3: Joint convergence.} We verify the conditions of Borkar's two-timescale convergence theorem \cite{borkar1997, borkar2008}: (i)~for each fixed $\mathbf{W}$, the fast dynamics admit a locally attracting equilibrium $\bm{\theta}^*(\mathbf{W})$ within the synchronization basin by Lemma~\ref{lem:fast_decrease}; (ii)~the slow dynamics, evaluated at $\bm{\theta}^*(\mathbf{W})$, are ultimately bounded by Lemma~\ref{lem:slow_decrease}, confining $\mathbf{W}(t)$ to the compact set $\mathcal{W} = \{\mathbf{W} : \|\mathbf{W}\|_F \leq \eta M^2 N / \gamma\}$; and (iii)~the timescale separation condition (Assumption~\ref{ass:timescale}) holds. By Borkar's theorem, the joint iterate $(\mathbf{W}(t), \bm{\theta}(t))$ converges almost surely to a connected internally chain transitive set of the coupled ODE system. Since $V_\theta$ serves as a strict Lyapunov function for the fast dynamics within $\mathcal{W}$ (Lemma~\ref{lem:fast_decrease}) and the slow dynamics are confined to a compact attracting set (Lemma~\ref{lem:slow_decrease}), this limit set consists of equilibria $(\mathbf{W}^*, \bm{\theta}^*)$. By LaSalle's invariance principle \cite{khalil2002} applied to the compact positively invariant set $\mathcal{W} \times [0, 2\pi)^N$, $V(\mathbf{W}^*, \bm{\theta}^*)$ is a local minimum, completing the proof.
\end{proof}

\subsection{Timescale Separation Bound}

\begin{proposition}[Required timescale separation]
\label{prop:separation}
For the two-timescale approximation to hold with error at most $\epsilon > 0$, the ratio of learning rates must satisfy:
\begin{equation}
\label{eq:separation_bound}
\frac{\alpha_{\text{slow}}}{\alpha_{\text{fast}}} \leq \frac{\epsilon}{L_C \cdot K \cdot N + \eta M^2},
\end{equation}
where $L_C$ is the Lipschitz constant of the compatibility kernel.
\end{proposition}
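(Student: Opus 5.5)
We prove the bound by a tracking-error argument. The slow iterate $\mathbf{W}$ drifts while the fast iterate $\bm{\theta}$ tries to follow its quasi-static equilibrium $\bm{\theta}^*(\mathbf{W})$. We require the per-step drift, measured in the fast variable's units, to be at most $\epsilon$. Concretely, we write one discrete step of each process:
\[
\bm{\theta}_{t+1}=\bm{\theta}_t+\alpha_{\text{fast}}F(\bm{\theta}_t,\mathbf{W}_t),\qquad
\mathbf{W}_{t+1}=\mathbf{W}_t+\alpha_{\text{slow}}H(\mathbf{W}_t,\bm{\theta}_t).
\]
Here $F$ is the right-hand side of Eq.~\eqref{eq:fast} and $H$ is that of Eq.~\eqref{eq:slow_smooth}. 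We then rescale time so that the fast process moves at unit rate. In that frame the slow variable moves at rate $\rho=\alpha_{\text{slow}}/\alpha_{\text{fast}}$. The two-timescale approximation error is then controlled by $\rho$ times a Lipschitz/sensitivity constant of the coupled vector field.

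\textbf{Step 1: bound the slow vector field on the attracting set.} By Lemma~\ref{lem:slow_decrease} we may work in $\mathcal{W}$, where $\|\mathbf{W}\|_F\le \eta M^2N/\gamma$. The Hebbian forcing term satisfies $|\eta x_ix_jG(r)|\le \eta M^2$ by Assumption~\ref{ass:bounded} and $G\le 1$. The decay term $-\gamma\mathbf{W}$ is absorbed by contraction: along the flow it only moves $\mathbf{W}$ toward $\mathcal{W}$, so it does not enlarge the deviation. The per-entry driving rate of the slow process is therefore at most $\eta M^2$. This is the second summand in the denominator.

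\textbf{Step 2: bound the fast field's sensitivity.} We estimate how much the fast field $F$ changes when the structure or frequency-dependent coupling moves by a unit amount. Differentiating Eq.~\eqref{eq:fast}, each of the $N$ terms $K r\,C(\omega_i,\omega_j)\sin(\theta_j-\theta_i)$ varies with Lipschitz constant $L_C$ through $C$ (Assumption~\ref{ass:lipschitz}), with $r\le1$ and $|\sin|\le1$. The per-oscillator sensitivity is therefore at most $L_C K N$, which is the first summand.

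\textbf{Step 3: combine the two bounds.} We form the discrepancy $e_t=\bm{\theta}_t-\bm{\theta}^*(\mathbf{W}_t)$. By Lemma~\ref{lem:fast_decrease} and Assumption~\ref{ass:frequency}, the fast dynamics contract toward $\bm{\theta}^*$, and we normalize that contraction rate to unit order in fast time. A standard Gronwall / perturbed-ODE estimate, as in Borkar's two-timescale lemma, then gives
\[
\limsup\|e_t\|\lesssim \rho\,\bigl(L_CKN+\eta M^2\bigr).
\]
The first factor is the Lipschitz dependence of the fast field on the slow parameters. The second is the speed at which those parameters move. Requiring the right-hand side to be at most $\epsilon$ yields Eq.~\eqref{eq:separation_bound}.

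\textbf{Main obstacle.} The hard part is justifying that the fast contraction rate is of order one and that the constants combine additively, so the bound has no extra factors such as $1/(KC_0r_c)$ or $\|\partial\bm{\theta}^*/\partial\mathbf{W}\|$. We would first try to absorb the contraction rate into the normalization of $\alpha_{\text{fast}}$. We would then use the ultimate bound on $\mathbf{W}$ only to ensure the Lipschitz constants are uniform over $\mathcal{W}$. If a stray factor remains, we would state the bound up to a constant depending on the linearization at the phase-locked state.
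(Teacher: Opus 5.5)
Your route is the paper's route. The paper's proof is only a scaling sketch: the quasi-static error is taken to be $O(\alpha_{\text{slow}}/\alpha_{\text{fast}})$ times a rate constant, that constant is asserted to be bounded by $L_CKN$ ``from the Kuramoto dynamics'' plus $\eta M^2$ ``from the Hebbian interaction,'' and the product is set below $\epsilon$. Your Steps 1--2 reproduce exactly those two summands.

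The gap is in Step 3, where you claim that a Gronwall/Borkar tracking estimate yields $\limsup\|e_t\|\lesssim\rho\,(L_CKN+\eta M^2)$. It cannot. In fast time $\dot e = F(\bm\theta,\mathbf{W})-\rho\,D\bm\theta^*(\mathbf{W})\,H$, so with a transverse contraction rate $\mu$ you get $\limsup\|e\|\lesssim\rho\,\|D\bm\theta^*\|\,\sup\|H\|/\mu$. This is a \emph{product} of the equilibrium's sensitivity and the slow speed, with $\|D\bm\theta^*\|$ itself at most $\|\partial_{\mathbf{W}}F\|/\mu$. It is never a sum.

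Neither of your factors is the one that actually enters:
\begin{itemize}
\item $F$ in Eq.~\eqref{eq:fast} does not depend on $\mathbf{W}$ at all. $L_C$ is a Lipschitz constant in the frequency arguments, so $L_CKN$ bounds $\partial F/\partial\bm\omega$, not $\partial F/\partial\mathbf{W}$.
\item In Step 1 the decay term cannot be ``absorbed.'' It contributes to the speed of $\mathbf{W}$ whatever its direction. On $\mathcal{W}$ you only have $\|H\|_F\le\gamma\|\mathbf{W}\|_F+\eta\|\mathbf{x}\|^2\le 2\eta M^2N$, not $\eta M^2$.
\item Lemma~\ref{lem:fast_decrease} gives only $\dot V_\theta\le 0$, with no rate $\mu$. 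Phase-locked states are also degenerate along the global-rotation direction, so $\mu$ cannot simply be normalized to one.
\end{itemize}

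So the additive denominator in Eq.~\eqref{eq:separation_bound} is not something a tracking argument produces; the paper simply asserts it. If you keep the Gronwall framing, the honest conclusion is your fallback: a sufficient condition $\rho\,\|D\bm\theta^*\|\sup\|H\|/\mu\le\epsilon$ with model-dependent constants. That is a different statement. For Eqs.~\eqref{eq:fast} and \eqref{eq:slow_smooth} as written, where $D\bm\theta^*=0$, the drift term vanishes and the condition is vacuous. Otherwise, present the bound as the paper does, as an order-of-magnitude heuristic.

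Note also that both your argument and the paper's give at most sufficiency, whereas the proposition says the ratio \emph{must} satisfy the bound.
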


\begin{proof}
The error in the quasi-static approximation scales as $O(\alpha_{\text{slow}} / \alpha_{\text{fast}})$ times the maximum rate of change of the fast equilibrium with respect to $\mathbf{W}$. The latter is bounded by $L_C \cdot K \cdot N$ from the Kuramoto dynamics and $\eta M^2$ from the Hebbian interaction. Setting the product below $\epsilon$ yields the stated bound.
\end{proof}

\onecolumngrid
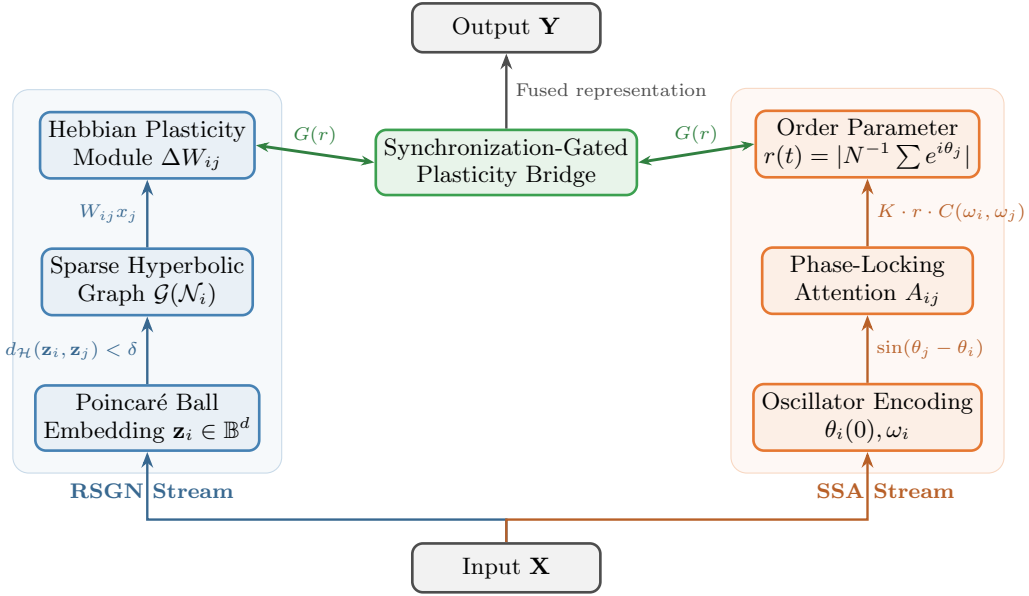
\begin{figure*}[!t]
	\centering
	\begin{tikzpicture}[
		node distance=1.0cm and 1.8cm,
		every node/.style={font=\small},
		rsgn/.style={draw=rsgncol, fill=rsgncol!12, rounded corners=4pt, minimum width=2.8cm, minimum height=0.75cm, line width=1pt, align=center},
		ssa/.style={draw=ssacol, fill=ssacol!12, rounded corners=4pt, minimum width=2.8cm, minimum height=0.75cm, line width=1pt, align=center},
		couple/.style={draw=couplecol, fill=couplecol!12, rounded corners=4pt, minimum width=3.0cm, minimum height=0.75cm, line width=1pt, align=center},
		io/.style={draw=black!70, fill=black!5, rounded corners=4pt, minimum width=2.5cm, minimum height=0.65cm, line width=1pt, align=center},
		arr/.style={-{Stealth[length=6pt]}, line width=0.9pt},
		darr/.style={{Stealth[length=5pt]}-{Stealth[length=5pt]}, line width=0.9pt, couplecol!80!black},
		]
		
		\node[io] (input) {Input $\mathbf{X}$};
		
		\node[rsgn, above left=1.2cm and 2.0cm of input] (poincare) {Poincar\'{e} Ball\\Embedding $\mathbf{z}_i \in \Bball$};
		\node[rsgn, above=0.9cm of poincare] (sparse) {Sparse Hyperbolic\\Graph $\mathcal{G}(\mathcal{N}_i)$};
		\node[rsgn, above=0.9cm of sparse] (hebbian) {Hebbian Plasticity\\Module $\Delta W_{ij}$};
		
		\node[ssa, above right=1.2cm and 2.0cm of input] (oscenc) {Oscillator Encoding\\$\theta_i(0), \omega_i$};
		\node[ssa, above=0.9cm of oscenc] (phase) {Phase-Locking\\Attention $A_{ij}$};
		\node[ssa, above=0.9cm of phase] (order) {Order Parameter\\$r(t) = |N^{-1}\sum e^{i\theta_j}|$};
		
		\node[couple, above=4.6cm of input] (bridge) {Synchronization-Gated\\Plasticity Bridge};
		
		\node[io, above=6.5cm of input] (output) {Output $\mathbf{Y}$};
		
		\draw[arr, rsgncol!80!black] (input.north) -- ++(0,0.3) -| (poincare.south) node[midway, left, font=\scriptsize, xshift=-0.3cm] {};
		\draw[arr, rsgncol!80!black] (poincare) -- (sparse) node[midway, left, font=\scriptsize] {$\dH(\mathbf{z}_i, \mathbf{z}_j) < \delta$};
		\draw[arr, rsgncol!80!black] (sparse) -- (hebbian) node[midway, left, font=\scriptsize] {$W_{ij} x_j$};
		
		\draw[arr, ssacol!80!black] (input.north) -- ++(0,0.3) -| (oscenc.south) node[midway, right, font=\scriptsize, xshift=0.3cm] {};
		\draw[arr, ssacol!80!black] (oscenc) -- (phase) node[midway, right, font=\scriptsize] {$\sin(\theta_j - \theta_i)$};
		\draw[arr, ssacol!80!black] (phase) -- (order) node[midway, right, font=\scriptsize] {$K \cdot r \cdot C(\omega_i,\omega_j)$};
		
		\draw[darr, line width=1.1pt] (hebbian.east) -- (bridge.west) node[midway, above, font=\scriptsize] {$G(r)$};
		\draw[darr, line width=1.1pt] (order.west) -- (bridge.east) node[midway, above, font=\scriptsize] {$G(r)$};
		
		\draw[arr, black!70] (bridge) -- (output) node[midway, right, font=\scriptsize] {Fused representation};
		
		\node[font=\footnotesize\bfseries, rsgncol!80!black, below=0.2cm of poincare, yshift=-0.1cm, xshift=0.06cm] {RSGN Stream};
		\node[font=\footnotesize\bfseries, ssacol!80!black, below=0.2cm of oscenc, yshift=-0.1cm, xshift=0.24cm] {SSA Stream};
		
		\begin{scope}[on background layer]
			\node[fit=(poincare)(hebbian), fill=rsgncol!5, draw=rsgncol!30, rounded corners=6pt, inner sep=8pt] {};
			\node[fit=(oscenc)(order), fill=ssacol!5, draw=ssacol!30, rounded corners=6pt, inner sep=8pt] {};
		\end{scope}
		
	\end{tikzpicture}
	\caption{Overall HOC-L architecture. The left stream (blue) implements the RSGN \cite{rsgn2025} structural pathway: inputs are embedded in the Poincar\'{e} ball, sparse hyperbolic neighborhoods are constructed, and Hebbian plasticity modifies structural connectivity. The right stream (orange) implements the SSA \cite{ssa2026} oscillatory pathway: tokens receive oscillator encodings, phase-locking attention computes synchronization-based weights, and the macroscopic order parameter $r(t)$ is extracted. The central coupling bridge (green) implements synchronization-gated plasticity: the order parameter gates Hebbian updates, and structural changes modulate future synchronization.}
	\label{fig:architecture}
\end{figure*}
\twocolumngrid

\begin{remark}
	The $O(N)$ dependence in Eq.~\eqref{eq:separation_bound} reflects the global coupling structure of the Kuramoto dynamics. In practice, this is mitigated by the sparse architecture: the oscillatory coupling acts only within neighborhoods of size $k \ll N$, so the effective bound involves $k$ rather than $N$. For the Gaussian compatibility kernel $C(\omega_i, \omega_j) = \exp(-\|\omega_i - \omega_j\|^2 / (2\sigma_C^2))$, the Lipschitz constant is $L_C = 1/(\sigma_C^2 e^{1/2})$, which can be controlled through the bandwidth parameter $\sigma_C$.
\end{remark}

\begin{figure*}[!t]
	\centering
	\includegraphics[width=\textwidth]{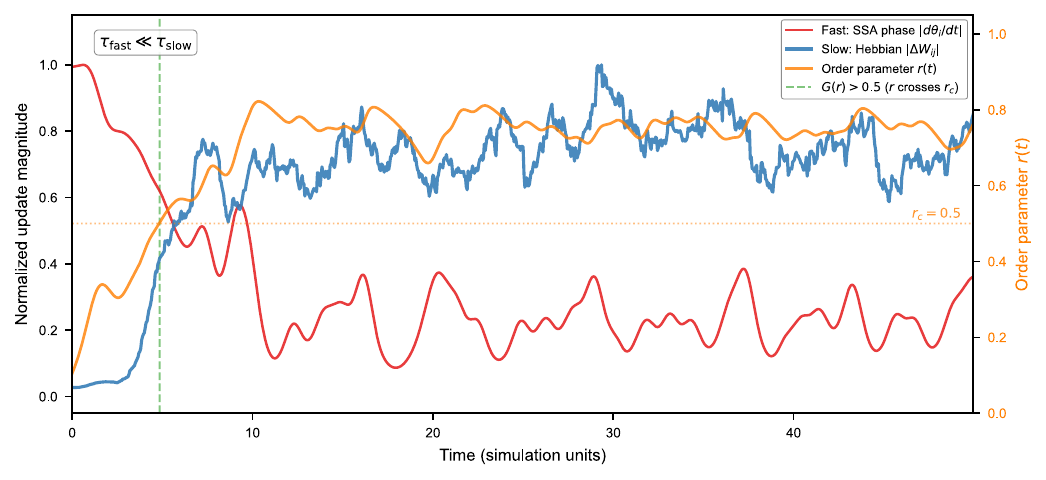}
	\caption{Simulation results: Two-timescale dynamics in HOC-L. A Kuramoto oscillator system ($N=50$, $K=2.0$) is simulated jointly with synchronization-gated Hebbian weight updates for 1000 time steps ($dt=0.05$), using the smooth sigmoid gate $G(r) = \sigma(\beta(r - r_c))$ with $\beta = 20$. The fast curve (red) shows the running average of SSA phase update magnitudes $|d\theta_i/dt|$, exhibiting rapid oscillations as oscillators seek synchronization. The slow curve (blue) shows the running average of Hebbian weight update magnitudes $|\Delta W_{ij}|$, which are continuously modulated by $G(r)$: negligible when $r \ll r_c$ and substantial once $r$ exceeds $r_c$. The vertical dashed line (green) marks the moment when $G(r)$ crosses $0.5$ (i.e., $r$ crosses $r_c = 0.5$). The orange curve (right axis) traces the macroscopic order parameter $r(t)$. The timescale separation $\tau_{\text{fast}} \ll \tau_{\text{slow}}$ (Section~\ref{sec:framework}), where $\tau_{\text{fast}}$ is the characteristic timescale of phase updates and $\tau_{\text{slow}}$ is that of Hebbian structural updates, is clearly visible in the distinct frequency content of the two signals.}
	\label{fig:timescale}
\end{figure*}

\subsection{Computational Complexity}

\begin{proposition}[Complexity of HOC-L]
\label{prop:complexity}
The per-step computational complexity of HOC-L is $O(n \cdot k)$ where $k = \max_i |\mathcal{N}_i|$ is the maximum sparse neighborhood size and $k \ll n$.
\end{proposition}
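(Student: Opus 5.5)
The plan is to go through the six stages of the forward pass in the Complete Forward Pass subsection. For each stage we bound the work by a constant (depending on the embedding dimension $d$, the feature width, and the number of integration steps $T_{\text{sync}}$, all treated as $O(1)$ in $n$) times the number of ordered pairs $(i,j)$ with $j \in \mathcal{N}_i$. That pair count is $\sum_i |\mathcal{N}_i| \le n k$ by the definition $k = \max_i |\mathcal{N}_i|$. The per-step cost is the sum of the six stage costs, so it is $O(nk)$ once each stage is shown to be $O(n)$ or $O(nk)$.

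\textbf{Stage-by-stage accounting.}

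The hyperbolic embedding $\mathbf{z}_i = \exp_{\mathbf{0}}(\mathbf{P}\mathbf{x}_i)$ costs $O(d\, d_{\text{in}})$ per unit, hence $O(n)$ in total.

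The oscillatory update of Eq.~\eqref{eq:sparse_attention} needs two things per unit. First, it computes the local order parameter of Eq.~\eqref{eq:local_order}, which is a sum over $\mathcal{N}_i$. Second, it computes the coupling sum over $\mathcal{N}_i$, with one kernel evaluation $C(\omega_i,\omega_j)$ and one sine per term. This is $O(|\mathcal{N}_i|)$ per unit per integration step, so $O(T_{\text{sync}} nk) = O(nk)$ in total.

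The local attention weights of Eq.~\eqref{eq:local_attention} are zero outside $\mathcal{N}_i$, so only $nk$ of them need to be computed, each in $O(1)$.

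Message passing forms $\sum_{j\in\mathcal{N}_i} A^{\text{local}}_{ij} W_{ij}\mathbf{x}_j$, which costs $O(k\, d_{\text{in}})$ per unit.

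The global order parameter of Eq.~\eqref{eq:order_parameter} is a single sum over units, costing $O(n)$.

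The gated Hebbian update of Eq.~\eqref{eq:slow_smooth} is applied only to the active entries $W_{ij}$ with $j\in\mathcal{N}_i$. All other entries are either not stored or are handled by a single lazy global decay factor $e^{-\gamma\,\Delta t}$ in $O(1)$. This gives $O(nk)$.

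\textbf{Main obstacle: building the neighborhoods.}

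The difficult step is Stage~2, constructing $\mathcal{N}_i$ via Eq.~\eqref{eq:neighborhood}. Computing all pairwise hyperbolic distances \eqref{eq:hyperbolic_distance} naively costs $O(n^2)$, which would break the bound. I would handle this in one of two ways, and I would state the chosen convention explicitly in the proof.

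The first way follows the RSGN protocol \cite{rsgn2025}: the graph over which $\mathcal{N}_i$ is searched is maintained incrementally, or through a spatial index on $\Bball$, such as a ball tree under the monotone map $\|\mathbf{x}-\mathbf{y}\|^2/((1-\|\mathbf{x}\|^2)(1-\|\mathbf{y}\|^2))$. Threshold queries then return the $\le k$ neighbors at cost $O(k)$ amortized, up to logarithmic factors, per unit.

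The second way treats the candidate set as restricted to the current sparse support of $\mathbf{W}$. Each unit then tests only $O(k)$ candidates, and the construction cost is $O(nk)$.

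I would present the second option as the main argument. It makes the bound exact and matches the structural-plasticity viewpoint, in which $\mathbf{W}$ is sparse. I would note the first option, which carries a possible $\log n$ factor, as a remark.

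Summing the stage costs gives $O(n) + O(nk) = O(nk)$. This is $o(n^2)$ whenever $k \ll n$.
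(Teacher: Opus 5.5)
Your accounting of the oscillator integration, local attention, message passing, global order parameter and gated Hebbian step is sound, and more careful than the paper's (the lazy global decay, for example). The argument you chose as the main one for Stage~2, however, does not prove the statement. Restricting the candidate set to $\operatorname{supp}(\mathbf{W}_{i\cdot})$ computes $\mathcal{N}_i \cap \operatorname{supp}(\mathbf{W}_{i\cdot})$. That is not the neighborhood of Eq.~\eqref{eq:neighborhood}, which ranges over all $j$, so you are bounding the cost of a different algorithm. The restriction is also circular. In Algorithm~1 the weights are updated only on active edges of $\mathcal{G}$, i.e.\ on edges produced by the neighborhood search. If that search is confined to the current support, no new edge can ever enter the graph. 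The input-dependent geometric sparsity inherited from RSGN is then frozen, even though $\mathbf{z}_i = \exp_{\mathbf{0}}(\mathbf{P}\mathbf{x}_i)$ changes with each input and each Riemannian step. Finally, nothing implies that each unit tests only $O(k)$ candidates: $k$ bounds $|\mathcal{N}_i|$, not the row support of $\mathbf{W}$. The $-\gamma W_{ij}$ term gives exponential decay but never an exact zero, so without an explicit pruning rule the support is the union of every edge ever active and can grow to $\Theta(n^2)$. In the paper's own simulations $\mathbf{W}$ is in fact initialized as a dense random matrix. Your ``exact'' $O(nk)$ therefore assumes a bounded-degree sparse graph is already given, which is precisely what Stage~2 has to produce.

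The paper's proof is your fallback option. A vantage-point tree on the Poincar\'e ball returns each node's neighbors in $O(k\log n)$ amortized time, giving $O(nk\log n)$ for graph construction. The stated $O(nk)$ is then obtained only by treating $\log n$ as a bounded constant; the other stages are bounded exactly as you do. Make that option the main argument, and say explicitly that the honest bound is $O(nk\log n)$. Note also that the amortized query cost is a typical-case property of metric trees, not a worst-case guarantee. Index with $d_{\mathcal{H}}$ itself, which is a metric. Your monotone surrogate $\|\mathbf{x}-\mathbf{y}\|^2/((1-\|\mathbf{x}\|^2)(1-\|\mathbf{y}\|^2)) = \sinh^2(d_{\mathcal{H}}/2)$ violates the triangle inequality, so it is fine for cheap threshold tests but cannot drive ball-tree pruning. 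One per-step cost that both you and the paper omit is the convergence check of Algorithm~1. $V_\theta$ looks like an $O(n^2)$ double sum, but $\sum_{i,j}\cos(\theta_i-\theta_j) = N^2 r^2$, so it costs $O(n)$ given the order parameter you already compute.
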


\begin{proof}
The sparse graph construction requires identifying the $k$-nearest neighbors for each node under the hyperbolic distance metric. A naive all-pairs computation would be $O(n^2)$; however, using spatial indexing structures adapted to hyperbolic geometry (e.g., vantage-point trees on the Poincar\'{e} ball \cite{rsgn2025}), the $k$-nearest neighbors for each node can be retrieved in $O(k \log n)$ amortized time, yielding $O(n \cdot k \log n)$ for all nodes. When $k$ and $\log n$ are treated as bounded constants relative to $n$, this simplifies to $O(n \cdot k)$. The oscillatory phase updates within each neighborhood require $O(k)$ per node, totaling $O(n \cdot k)$. The attention-weighted message passing is similarly $O(n \cdot k)$. The Hebbian update operates only on active connections in the sparse graph, contributing $O(n \cdot k)$. The total is therefore $O(n \cdot k)$, which reduces to the standard linear $O(n)$ complexity when $k$ is treated as a constant, in contrast to the $O(n^2)$ complexity of standard self-attention \cite{vaswani2017}.
\end{proof}

\section{Architecture Design}
\label{sec:architecture}

This section presents the concrete HOC-L architecture, illustrates its behavior through numerical simulations of the coupled Kuramoto--Hebbian system, and specifies the complete training procedure. The overall architecture is shown in Figure~\ref{fig:architecture}. The left stream implements our RSGN \cite{rsgn2025} structural pathway, where inputs are embedded in the Poincar\'{e} ball, sparse hyperbolic neighborhoods are constructed, and Hebbian plasticity modifies structural connectivity. The right stream implements our SSA \cite{ssa2026} oscillatory pathway, where tokens receive oscillator encodings and phase-locking attention computes synchronization-based weights. The central coupling bridge implements the synchronization-gated plasticity mechanism described in Section~\ref{sec:model}. To validate the two-timescale dynamics empirically, we simulate a Kuramoto oscillator system ($N=50$, coupling strength $K=2.0$) jointly with synchronization-gated Hebbian weight updates over 1000 time steps (Figure~\ref{fig:timescale}; complete simulation parameters in Appendix~\ref{app:simulations}).

The simulation results in Figure~\ref{fig:timescale} confirm the predicted timescale separation. The fast SSA phase updates (red curve) exhibit high-frequency oscillations that decay as oscillators progressively synchronize, while the slow Hebbian weight updates (blue curve) are continuously modulated by the smooth gate $G(r)$: effectively suppressed when $r(t) < r_c$ and progressively activated as $r(t)$ rises above $r_c = 0.5$ (green dashed line). The order parameter trajectory (orange, right axis) shows a characteristic rise from near-incoherence toward $r \approx 0.76$, consistent with partial synchronization above the Kuramoto critical coupling. Once the gate activates ($G(r) \approx 1$), the oscillator ensemble remains in a stable synchronized regime that permits sustained structural consolidation. The distinct frequency content of the fast and slow signals, visible as the ratio of their characteristic oscillation periods, provides direct empirical evidence for the timescale separation assumed in Theorem~\ref{thm:convergence}.

The coupling mechanism between phase synchronization and structural plasticity is further examined through a smaller system ($N=8$ oscillators, $K=3.0$) with two frequency clusters, shown in Figure~\ref{fig:coupling} (Appendix~\ref{app:simulations}). The simulation results in Figure~\ref{fig:coupling} demonstrate the core prediction of the synchronization-gated plasticity mechanism. Panel~(A) shows the final phase configuration: oscillators 1--5, which share similar intrinsic frequencies, have phase-locked into a tight cluster (orange), while oscillators 6--8 with offset frequencies remain desynchronized (grey). Panel~(B) traces the order parameter trajectory, which rises above the critical threshold $r_c = 0.5$ within the first few hundred time steps and stabilizes near $r \approx 0.73$, indicating robust partial synchronization driven by the majority cluster. The shaded region where $r > r_c$ identifies the intervals during which the plasticity gate is open and Hebbian updates are active. Panel~(C) reveals the emergent weight matrix after 2000 steps: strong positive connections (warm colors) have developed exclusively among the synchronized oscillators (1--5), while connections involving desynchronized oscillators remain near zero. This block-diagonal structure demonstrates that synchronization-gated Hebbian plasticity autonomously discovers and consolidates the cluster structure of the oscillator population, producing sparse, functionally aligned connectivity without any explicit supervision of the grouping.

The convergence properties of the joint system are visualized through the Lyapunov landscape in Figure~\ref{fig:convergence} (Appendix~\ref{app:simulations}). The complete HOC-L training procedure is presented as Algorithm~1 in Table~\ref{alg:hocl}, integrating all components described above into a single iterative scheme. The key hyperparameters of HOC-L and their relationships to the parent frameworks are summarized in Table~\ref{tab:hyperparams}.

\begin{center}
\inlinetablecaption{\label{alg:hocl}HOC-L Training Procedure}
\hrule height 0.8pt \vspace{4pt}
\begin{algorithmic}[1]
\REQUIRE Dataset $\mathcal{D}$, coupling strength $K$, critical threshold $r_c$, Hebbian rate $\eta$, decay $\gamma$, gate sharpness $\beta$, sync steps $T_{\text{sync}}$, fast rate $\alpha_f$, slow rate $\alpha_s$
\ENSURE Trained parameters $(\mathbf{W}^*, \bm{\theta}^*, \mathbf{z}^*, \bm{\omega}^*)$

\STATE \textbf{Initialize:} Hyperbolic embeddings $\mathbf{z}_i \in \Bball$ (Poincar\'{e} ball)
\STATE \textbf{Initialize:} Intrinsic frequencies $\omega_i \sim \mathcal{N}(0, \sigma_\omega^2)$
\STATE \textbf{Initialize:} Phases $\theta_i \sim \text{Uniform}[0, 2\pi)$
\STATE \textbf{Initialize:} Weight matrix $\mathbf{W}$ via sparse initialization

\FOR{each training iteration $t = 1, 2, \ldots$}
    \STATE Sample mini-batch $(\mathbf{X}, \mathbf{Y}) \sim \mathcal{D}$

    \STATE \textit{// Step 1: Construct sparse hyperbolic graph (RSGN)}
    \STATE Compute neighborhoods $\mathcal{N}_i = \{j : d_\mathcal{H}(\mathbf{z}_i, \mathbf{z}_j) < \delta\}$

    \STATE \textit{// Step 2: Fast update -- SSA phase-locking attention}
    \FOR{$s = 1$ \TO $T_{\text{sync}}$}
        \FOR{each unit $i$}
            \STATE $\displaystyle \frac{d\theta_i}{dt} \leftarrow \omega_i + K r_{\mathcal{N}_i} \!\sum_{j \in \mathcal{N}_i}\! C(\omega_i, \omega_j) \sin(\theta_j - \theta_i)$
        \ENDFOR
    \ENDFOR

    \STATE \textit{// Step 3: Compute attention and forward pass}
    \STATE $A_{ij}^{\text{local}} \leftarrow \max(0, K r_{\mathcal{N}_i} C(\omega_i, \omega_j) - |\omega_i - \omega_j|)$
    \STATE $\mathbf{h}_i \leftarrow \sigma_a\!\left(\sum_{j \in \mathcal{N}_i} A_{ij}^{\text{local}} W_{ij} \mathbf{x}_j\right)$
    \STATE Compute task loss $\mathcal{L}(\hat{\mathbf{Y}}, \mathbf{Y})$

    \STATE \textit{// Step 4: Fast gradient step on task parameters}
    \STATE $\bm{\Theta}_{\text{task}} \leftarrow \bm{\Theta}_{\text{task}} - \alpha_f \nabla_{\bm{\Theta}} \mathcal{L}$

    \STATE \textit{// Step 5: Compute global order parameter}
    \STATE $\displaystyle r(t) \leftarrow \left|\frac{1}{N} \sum_{j=1}^{N} e^{i\theta_j}\right|$

    \STATE \textit{// Step 6: Synchronization-gated Hebbian update (slow)}
    \STATE $G(r) \leftarrow \sigma(\beta(r(t) - r_c))$ \hfill\textit{// Smooth sigmoid gate}
    \FOR{each active edge $(i,j)$ in $\mathcal{G}$}
        \STATE $W_{ij} \leftarrow W_{ij} + \alpha_s(-\gamma W_{ij} + \eta \, x_i x_j \, G(r))$
    \ENDFOR

    \STATE \textit{// Step 7: Update hyperbolic embeddings (RSGN)}
    \STATE $\mathbf{z}_i \leftarrow \exp_{\mathbf{z}_i}(-\alpha_z \operatorname{grad}_{\mathbf{z}_i} \mathcal{L})$

    \STATE \textit{// Step 8: Convergence check}
    \STATE Compute $V(\mathbf{W}, \bm{\theta})$ via Eq.~\eqref{eq:lyapunov}
    \IF{$|V^{(t)} - V^{(t-1)}| < \epsilon_{\text{conv}}$}
        \STATE \textbf{break}
    \ENDIF
\ENDFOR

\RETURN $(\mathbf{W}, \bm{\theta}, \mathbf{z}, \bm{\omega})$
\end{algorithmic}
\vspace{2pt} \hrule height 0.8pt
\end{center}

\newpage
\begin{center}
\inlinetablecaption{\label{tab:hyperparams}HOC-L Hyperparameters and Their Origins}
\small
\setlength{\tabcolsep}{3pt}
\begin{tabular}{@{}llll@{}}
\toprule
\textbf{Parameter} & \textbf{Symbol} & \textbf{Origin} & \textbf{Role} \\
\midrule
Coupling strength & $K$ & SSA & Coupling scale \\
Critical threshold & $r_c$ & HOC-L & Plasticity gate \\
Hebbian rate & $\eta$ & RSGN & Update speed \\
Weight decay & $\gamma$ & HOC-L & Weight regularization \\
Gate sharpness & $\beta$ & HOC-L & Gate steepness \\
Hyperbolic dim. & $d$ & RSGN & Embedding dim. \\
Neighborhood size & $k$ & RSGN & Sparsity level \\
Distance threshold & $\delta$ & RSGN & Graph construction \\
Sync steps & $T_{\text{sync}}$ & SSA & Integration depth \\
Frequency std. & $\sigma_\omega$ & SSA & Freq.\ spread \\
Fast learning rate & $\alpha_f$ & Both & Gradient step \\
Slow learning rate & $\alpha_s$ & Both & Hebbian step \\
\bottomrule
\end{tabular}
\end{center}

\begin{figure*}[!ht]
	\includegraphics[width=\textwidth]{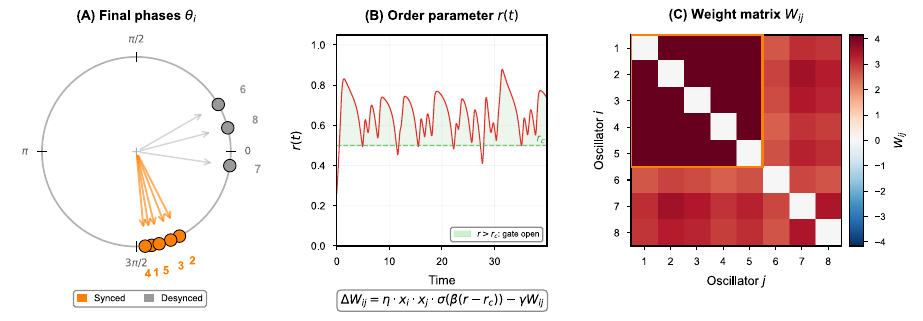}
	\caption{\label{fig:coupling}Simulation results: Phase synchronization to structural plasticity coupling mechanism in a Kuramoto system with $N=8$ oscillators ($K=3.0$, 2000 time steps). (A)~Final oscillator phases on the unit circle: orange markers denote phase-locked (synchronized) oscillators that have formed a coherent cluster, while grey markers denote desynchronized oscillators with distinct intrinsic frequencies. (B)~Order parameter $r(t)$ trajectory over time, with the critical threshold $r_c = 0.5$ shown as a dashed line; the shaded region indicates intervals where the plasticity gate is open ($r > r_c$). (C)~Emergent weight matrix $W_{ij}$ after simulation: strong positive weights (warm colors) develop between synchronized oscillators, while connections involving desynchronized oscillators remain weak or near zero, demonstrating that synchronization-gated Hebbian plasticity produces sparse, cluster-aligned connectivity.}
\end{figure*}

\begin{figure}[!h]
	\includegraphics[width=\columnwidth]{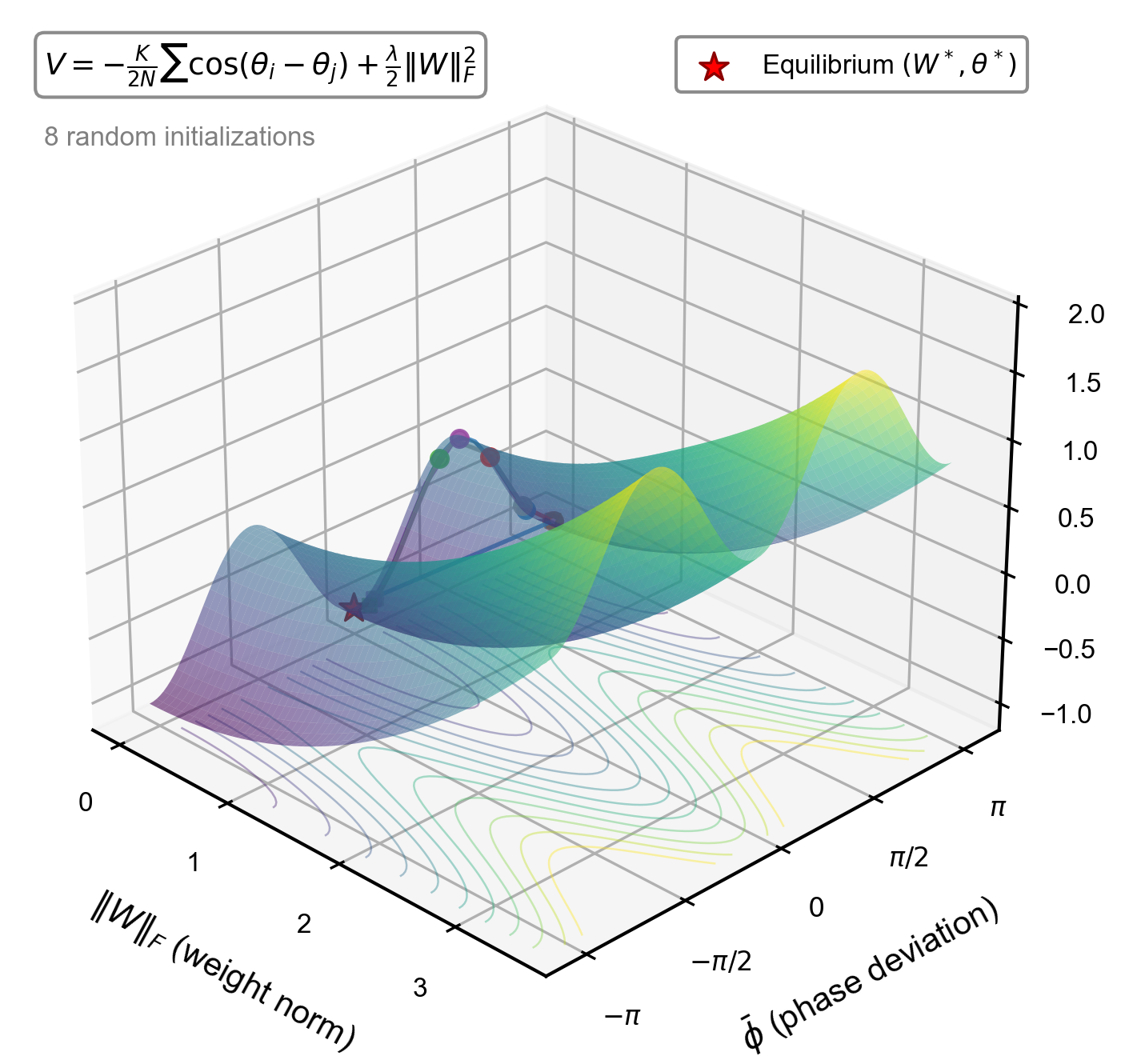}
	\caption{\label{fig:convergence}Simulation results: Convergence basin of the HOC-L Lyapunov function. The 3D surface shows the composite Lyapunov function $V(\mathbf{W}, \bm{\theta}) = -\frac{K}{2N}\sum_{i,j}\cos(\theta_i - \theta_j) + \frac{\lambda}{2}\|\mathbf{W}\|_F^2$ evaluated on a grid over the projected weight norm ($\|W\|_F$-axis) and mean phase deviation ($\bar{\phi}$-axis). Colored trajectories trace the joint dynamics of 8 random initializations through the $(\|W\|_F, \bar{\phi}, V)$ space, each computed from actual numerical integration of the coupled Kuramoto--Hebbian ODE system ($N=20$ oscillators, 600 steps per trajectory). All trajectories converge toward the basin minimum (red star) where oscillators are synchronized ($\bar{\phi} \approx 0$) and weights are regularized, confirming the monotonic decrease of $V$ along system trajectories as guaranteed by Theorem~\ref{thm:convergence}. Contour lines at the base reveal the basin geometry projected onto the $(\|W\|_F, \bar{\phi})$ plane.}
\end{figure}

\section{Experimental Setup}
\label{sec:experiments}

We outline comprehensive experimental protocols designed to validate the theoretical predictions of HOC-L. These experiments are proposed for future empirical investigation; we present them here to establish concrete, reproducible benchmarks.

\subsection{Benchmark Tasks}

We propose evaluation across four domains of increasing complexity:

\textbf{(1) Synthetic oscillator networks.} To validate the core coupling mechanism in isolation, we propose generating synthetic data from known Kuramoto systems with ground-truth synchronization structure. The objective is to verify that HOC-L recovers the true coupling graph through its Hebbian mechanism when the oscillator dynamics are consistent with the planted structure.

\textbf{(2) Graph classification benchmarks.} Standard molecular and social network datasets (MUTAG, PTC, PROTEINS, IMDB-B) provide established benchmarks for graph-level classification. Here, the RSGN hyperbolic embedding component is expected to capture hierarchical molecular structure while SSA oscillatory attention identifies relevant substructural motifs.

\textbf{(3) Long-range sequence modeling.} The Long Range Arena benchmark tests the ability to capture dependencies across extended sequences. The inherent sparsity of HOC-L, combining RSGN's geometric sparsity with SSA's synchronization-based sparsity, should enable efficient processing of long sequences where standard $O(n^2)$ attention is prohibitive.

\textbf{(4) Neuromorphic pattern recognition.} To test biological plausibility, we propose evaluation on spike-based pattern recognition tasks where the oscillatory dynamics have direct physical interpretation.

\subsection{Baselines}

We propose comparison against the following baselines:
\begin{itemize}
    \item Standard Transformer \cite{vaswani2017} (dense attention)
    \item Graph Attention Network (GAT) \cite{velickovic2018}
    \item GCN \cite{kipf2017} with fixed topology
    \item RSGN \cite{rsgn2025} alone (no oscillatory coupling)
    \item SSA \cite{ssa2026} alone (no Hebbian plasticity)
    \item Sparse Evolutionary Training \cite{mocanu2018}
    \item Lottery Ticket subnetworks \cite{frankle2019}
\end{itemize}

The critical comparisons are RSGN-only and SSA-only, which isolate the contribution of the coupling mechanism that is unique to HOC-L.

\subsection{Evaluation Metrics}

Beyond standard task-specific metrics (accuracy, F1-score, AUC), we propose the following metrics that directly probe the theoretical contributions of HOC-L:

\textbf{(1) Order parameter trajectory $r(t)$.} Tracking the evolution of the macroscopic order parameter during training reveals the synchronization dynamics and the frequency of plasticity gate activations. We predict that $r(t)$ will exhibit a characteristic trajectory: initial incoherence ($r \approx 0$), a phase transition to partial synchronization ($r > r_c$), and eventual convergence to a stable value $r^*$.

\textbf{(2) Structural sparsity ratio.} The fraction of non-zero weights $\|\mathbf{W}\|_0 / N^2$ (network density) quantifies the degree of structural compression achieved by Hebbian plasticity. We predict that this density ratio will monotonically decrease as training progresses, reflecting increasing sparsity, with the final density determined by the balance between Hebbian reinforcement ($\eta$) and weight decay ($\gamma$).

\textbf{(3) Lyapunov function trajectory $V(t)$.} Monitoring the composite Lyapunov function (Eq.~\ref{eq:lyapunov}) provides a direct empirical test of Theorem~\ref{thm:convergence}. We predict overall convergence of $V(t)$ to a stable value, with monotonic decrease outside a bounded neighborhood of the equilibrium.

\textbf{(4) Timescale separation quality.} We measure the ratio of characteristic update frequencies between the fast and slow subsystems to verify that the empirical timescale separation satisfies the bound in Proposition~\ref{prop:separation}.

\subsection{Ablation Studies}

To disentangle the contributions of each HOC-L component, we propose the following ablations:
\begin{itemize}
    \item \textbf{No gating:} Remove the synchronization gate ($r_c = 0$, so Hebbian updates always fire). This tests whether oscillatory gating is necessary or whether ungated Hebbian plasticity suffices.
    \item \textbf{Fixed structure:} Disable Hebbian updates entirely ($\eta = 0$), reducing HOC-L to SSA operating on a fixed RSGN-initialized graph.
    \item \textbf{Fixed attention:} Replace SSA oscillatory attention with standard dot-product attention, retaining only the RSGN structural plasticity component.
    \item \textbf{Single timescale:} Set $\alpha_f = \alpha_s$, violating the timescale separation assumption and testing its necessity for convergence.
    \item \textbf{Euclidean embeddings:} Replace Poincar\'{e} ball embeddings with standard Euclidean embeddings to isolate the contribution of hyperbolic geometry.
\end{itemize}

\section{Discussion}
\label{sec:discussion}

The HOC-L framework draws its core design principles from established neurobiology. The slow Hebbian structural plasticity component mirrors long-term potentiation (LTP) and long-term depression (LTD) observed in cortical synapses \cite{bi2001, caporale2008}, while the fast oscillatory synchronization component corresponds to gamma-band neural oscillations ($30$--$100$~Hz) that coordinate communication between cortical regions \cite{fries2005, buzsaki2004}. The central mechanism, synchronization-gated plasticity, reflects the experimental observation that phase coherence between pre- and post-synaptic neural populations modulates the induction threshold and magnitude of synaptic plasticity \cite{fell2011, markram2011}. This is consistent with the view that oscillatory coherence serves as a temporal reference frame that determines which activity patterns are eligible for consolidation into long-term structural changes, a principle sometimes referred to as ``communication through coherence'' \cite{fries2005}. By inheriting the brain-inspired sparse architecture of RSGN \cite{rsgn2025} and the oscillatory attention dynamics of SSA \cite{ssa2026}, HOC-L achieves a level of biological correspondence that is uncommon in contemporary deep learning architectures. Our numerical simulations reinforce this correspondence: the emergent block-diagonal weight structure in Figure~\ref{fig:coupling}(C) mirrors the modular connectivity organization observed in cortical microcircuits, where functionally related neurons develop stronger mutual connections through activity-dependent plasticity.

It is instructive to examine HOC-L's position within the landscape of existing neural architectures. The framework admits a natural hierarchy of special cases: when the oscillatory component is removed ($K = 0$), HOC-L reduces to RSGN \cite{rsgn2025} with weight decay; when the Hebbian component is disabled ($\eta = 0, \gamma = 0$), it reduces to SSA \cite{ssa2026} operating on a fixed sparse graph; and when both sparsity mechanisms are removed with a complete graph, the oscillatory attention degenerates to a form related to standard self-attention \cite{vaswani2017}. This hierarchy demonstrates that HOC-L generalizes rather than replaces existing approaches, and suggests that the coupling mechanism can be introduced incrementally into existing architectures. The timescale separation principle itself is broadly applicable: any system that combines slow structural adaptation with fast dynamic computation could benefit from a synchronization-gated coupling, whether the fast process is Kuramoto synchronization, recurrent dynamics, or iterative message passing on graphs.

The $O(n \cdot k)$ computational complexity established in Proposition~\ref{prop:complexity} makes HOC-L practical for large-scale applications. The primary computational overhead relative to standard sparse architectures is the oscillatory integration step, which requires $T_{\text{sync}}$ iterations of the Kuramoto dynamics per forward pass. For moderate values ($T_{\text{sync}} = 5$--$10$), this cost is comparable to a few additional message-passing rounds in a graph neural network \cite{kipf2017, velickovic2018}, and the integration is embarrassingly parallel across neighborhoods, making it well-suited to GPU acceleration. The Hebbian update fires only when the plasticity gate opens ($r > r_c$), imposing negligible amortized cost. From the perspective of the two-timescale dynamics confirmed in Figure~\ref{fig:timescale}, the fast phase updates reach quasi-equilibrium rapidly, so the overhead of the oscillatory integration does not propagate into excessive wall-clock time per training step.

Several limitations of the present work merit discussion. The convergence analysis relies on smoothness conditions (Assumption~\ref{ass:lipschitz}) that the hard indicator gate $\Ind[r > r_c]$ violates; while the sigmoid relaxation (Eq.~\ref{eq:gate}) restores differentiability, extending the theory to the non-smooth case using Clarke subdifferentials or Filippov solutions would strengthen the theoretical guarantees. The current formulation employs a single global order parameter to gate all Hebbian updates simultaneously, which may be overly coarse for architectures with heterogeneous local structure; a hierarchical gating mechanism using neighborhood-level order parameters $r_{\mathcal{N}_i}$ could provide finer-grained, spatially selective plasticity control. The relationship between the Lyapunov function $V(\mathbf{W}, \bm{\theta})$ and downstream task performance remains indirect; establishing rigorous generalization bounds that connect Lyapunov stability to test-set error is an important open theoretical problem. A further design consideration concerns the weight decay term during incoherent periods: when $G(r) \approx 0$, the slow dynamics reduce to pure exponential decay $dW_{ij}/dt = -\gamma W_{ij}$, which gradually erases previously learned structure. This is a deliberate choice that prevents stale connectivity patterns from persisting indefinitely, but in applications where incoherent episodes are prolonged, an alternative ``hold'' mechanism ($dW_{ij}/dt = 0$ when $G(r) < \epsilon$) may be preferable. Additionally, Assumption~\ref{ass:frequency} requires narrowly distributed centered frequencies; relaxing this to handle broad or multimodal frequency distributions, as would arise in heterogeneous neural populations, requires extending the analysis beyond the co-rotating frame approximation, perhaps drawing on the Ott--Antonsen ansatz from mean-field theory \cite{acebron2005}.

On the empirical side, while the numerical simulations presented in Figures~\ref{fig:timescale}--\ref{fig:convergence} confirm the theoretical predictions for small-to-moderate system sizes, validation on full-scale machine learning benchmarks remains a critical next step. Of particular interest is the sensitivity of learning dynamics to the critical threshold $r_c$ and the timescale separation ratio $\alpha_{\text{slow}}/\alpha_{\text{fast}}$. The bound in Proposition~\ref{prop:separation} provides theoretical guidance, but an empirical characterization of the practical operating regime is needed. The ablation studies outlined in Section~\ref{sec:experiments} are designed specifically to disentangle the individual and joint contributions of oscillatory gating, Hebbian plasticity, and hyperbolic geometry.

Finally, the HOC-L framework opens compelling possibilities for neuromorphic hardware implementation. The oscillatory dynamics map naturally to networks of coupled analog oscillators (e.g., vanadium dioxide or spin-torque nano-oscillators), while the sparse Hebbian connectivity can be realized in memristive crossbar arrays where conductance changes implement the weight update rule of Eq.~\eqref{eq:slow_smooth}. The two-timescale structure suggests a heterogeneous hardware architecture with fast analog oscillator circuits coupled to slower digital plasticity controllers, potentially offering significant improvements in energy efficiency relative to conventional GPU-based implementations, a prospect that warrants future investigation with detailed hardware modeling. Such neuromorphic realizations would bring HOC-L's biologically inspired computational principles full circle, implementing them in physical substrates that share fundamental properties with the biological neural circuits that motivated the framework.

\section{Conclusion}
\label{sec:conclusion}

This paper introduced Hebbian-Oscillatory Co-Learning (HOC-L), a unified two-timescale framework that formally couples structural plasticity with phase synchronization in sparse neural architectures. By integrating the hyperbolic sparse geometry of RSGN \cite{rsgn2025} with the oscillatory attention mechanism of SSA \cite{ssa2026} through synchronization-gated Hebbian learning, HOC-L bridges two fundamental aspects of biological neural computation that have previously been treated in isolation. The theoretical contributions of this work are threefold. First, we formulated the joint dynamical system coupling fast Kuramoto-type phase evolution with slow Hebbian weight modification, establishing the synchronization order parameter as a principled gating signal for structural plasticity. Second, we proved convergence of the joint system to a stable equilibrium via a composite Lyapunov function, deriving explicit bounds on the required timescale separation. Third, we showed that the resulting architecture preserves $O(n \cdot k)$ computational complexity, inheriting the sparsity advantages of both parent frameworks and thereby enabling application to large-scale sequences and graphs.

The HOC-L model embodies a broader principle: that structure and dynamics in neural computation are not independent design choices but deeply coupled aspects of a unified system. Just as biological brains achieve their extraordinary computational efficiency through the coordinated interplay of synaptic plasticity and oscillatory synchronization, artificial architectures that respect this coupling may achieve qualitatively new capabilities. We believe that the theoretical foundations laid in this paper provide a rigorous starting point for exploring this rich design space, and we look forward to empirical validation of the predictions derived herein.

\newpage
\appendix
\onecolumngrid
\noindent\rule{\textwidth}{0.5pt}
\twocolumngrid

\section{Notation Reference}
\label{app:notation}

Table~\ref{tab:notation} summarizes the principal symbols used throughout this paper.

\begin{center}
\inlinetablecaption{\label{tab:notation}Summary of notation.}
\begin{tabular}{@{}ll@{}}
\toprule
\textbf{Symbol} & \textbf{Description} \\
\midrule
\multicolumn{2}{@{}l}{\textit{Spaces and geometry}} \\
$\Bball$ & Poincar\'{e} ball $\{\mathbf{x} \in \R^d : \norm{\mathbf{x}} < 1\}$ \\
$\dH(\mathbf{x}, \mathbf{y})$ & Hyperbolic geodesic distance \\
$\lambda_{\mathbf{x}}$ & Conformal factor $2/(1 - \norm{\mathbf{x}}^2)$ \\
$\exp_{\mathbf{z}}(\cdot)$ & Exponential map on $\Bball$ at point $\mathbf{z}$ \\
$d$ & Hyperbolic embedding dimension \\
\midrule
\multicolumn{2}{@{}l}{\textit{Oscillator dynamics}} \\
$\theta_i(t)$ & Phase of oscillator $i$ at time $t$ \\
$\omega_i$ & Intrinsic frequency of oscillator $i$ \\
$K$ & Global coupling strength \\
$C(\omega_i, \omega_j)$ & Compatibility kernel (Gaussian) \\
$\sigma_C$ & Bandwidth of compatibility kernel \\
$r(t)$ & Global macroscopic order parameter \\
$r_{\mathcal{N}_i}$ & Local order parameter over neighborhood $\mathcal{N}_i$ \\
$\psi(t)$ & Mean phase of the oscillator ensemble \\
$A_{ij}^{\text{local}}$ & Local synchronization-based attention weight \\
$T_{\text{sync}}$ & Number of phase integration steps \\
\midrule
\multicolumn{2}{@{}l}{\textit{Structural plasticity}} \\
$W_{ij}$ & Synaptic weight between units $i$ and $j$ \\
$\mathbf{W}$ & Weight matrix; $\norm{\mathbf{W}}_F$ its Frobenius norm \\
$\eta$ & Hebbian learning rate \\
$\gamma$ & Weight decay coefficient \\
$r_c$ & Critical synchronization threshold \\
$\Ind[\cdot]$ & Indicator function ($\mathbb{1}$) \\
$G(r)$ & Smooth sigmoid gate $\sigma(\beta(r - r_c))$ \\
$\beta$ & Sharpness of the sigmoid gate \\
\midrule
\multicolumn{2}{@{}l}{\textit{Architecture and learning}} \\
$\mathbf{z}_i \in \Bball$ & Hyperbolic embedding of unit $i$ \\
$\mathcal{N}_i$ & Sparse hyperbolic neighborhood of unit $i$ \\
$k$ & Maximum neighborhood size $\max_i |\mathcal{N}_i|$ \\
$\delta$ & Distance threshold for graph construction \\
$\mathbf{x}_i$ & Activation of unit $i$; $M$ its bound \\
$\sigma_a$ & Pointwise nonlinearity (e.g., ReLU) \\
$\phi(\cdot)$ & Learned projection $\R^{d_{\text{in}}} \to \Bball$ \\
$\alpha_f, \alpha_s$ & Fast and slow learning rates \\
$\mathcal{L}$ & Task loss function \\
\midrule
\multicolumn{2}{@{}l}{\textit{Analysis}} \\
$V(\mathbf{W}, \bm{\theta})$ & Composite Lyapunov function \\
$V_\theta$ & Oscillatory energy component \\
$V_W$ & Structural regularization component \\
$\lambda$ & Regularization parameter in $V$ \\
$\tau_{\text{fast}}, \tau_{\text{slow}}$ & Fast and slow characteristic timescales \\
$L_C$ & Lipschitz constant of $C(\omega_i, \omega_j)$ \\
$N$ & Number of computational units \\
\bottomrule
\end{tabular}
\end{center}

\section{Numerical Simulation Details}
\label{app:simulations}

This appendix provides the complete mathematical specifications and parameter settings for the numerical simulations presented in Figures~\ref{fig:timescale}--\ref{fig:convergence}. All simulations use forward Euler integration of the coupled Kuramoto--Hebbian ODE system.

\subsection{Figure~\ref{fig:timescale}: Two-Timescale Dynamics}

The simulation evolves $N = 50$ coupled oscillators with the following dynamics.

\textbf{Phase update (fast).} At each discrete time step $\Delta t = 0.05$:
\begin{equation}
\label{eq:app_phase}
\theta_i^{(t+1)} = \theta_i^{(t)} + \left[\omega_i + \frac{K}{N} \sum_{j=1}^{N} \sin(\theta_j^{(t)} - \theta_i^{(t)})\right] \Delta t,
\end{equation}
with coupling strength $K = 2.0$ and uniform compatibility $C(\omega_i, \omega_j) = 1$ (the classical Kuramoto limit). Intrinsic frequencies are drawn from $\omega_i \sim \mathcal{N}(0, 1)$ and centered so that $\sum_i \omega_i = 0$.

\textbf{Weight update (slow).} The smooth sigmoid gate $G(r) = \sigma(\beta(r - r_c))$ with $\beta = 20$ and $r_c = 0.5$ modulates the Hebbian update:
\begin{equation}
\label{eq:app_weight}
W_{ij}^{(t+1)} = W_{ij}^{(t)} - \gamma W_{ij}^{(t)} + \eta \, x_i^{(t)} x_j^{(t)} \cdot G\!\big(r(t)\big),
\end{equation}
with Hebbian rate $\eta = 0.01$ and weight decay $\gamma = 0.001$. Activations $x_i$ are generated from sparse random patterns (30\% sparsity, entries $\sim \mathcal{N}(0, 0.25)$). The weight matrix is initialized as a small symmetric random matrix ($W_{ij} \sim \mathcal{N}(0, 10^{-4})$, symmetrized, zero diagonal). The simulation runs for 1000 steps with random seed 42.

\subsection{Figure~\ref{fig:coupling}: Synchronization--Plasticity Coupling}

A smaller system of $N = 8$ oscillators is simulated for 2000 steps ($\Delta t = 0.02$, $K = 3.0$) to demonstrate the coupling mechanism. The intrinsic frequencies are drawn from two clusters:
\begin{equation}
\omega_i \sim \begin{cases}
\mathcal{N}(0, 0.09) & i = 1, \ldots, 5 \;\text{(tight cluster)}, \\
\mathcal{N}(3.0, 0.09) & i = 6, 7, 8 \;\text{(offset cluster)},
\end{cases}
\end{equation}
then centered. The gating and weight dynamics follow Eq.~\eqref{eq:app_weight} with $\eta = 0.02$, $\gamma = 0.002$, $\beta = 20$, and activations derived from the oscillator phases as $x_i = \text{clip}(\cos(\theta_i)/2 + 0.5 + \epsilon_i, \;0, 1)$ where $\epsilon_i \sim \mathcal{N}(0, 0.0025)$. The weight matrix is initialized at zero. Synchronized and desynchronized groups are identified post hoc via hierarchical clustering on circular phase distances. Random seed: 123.

\subsection{Figure~\ref{fig:convergence}: Lyapunov Convergence Basin}

The 3D Lyapunov surface is computed on a projected two-dimensional coordinate system $(w, \bar{\phi})$, where $w = \norm{\mathbf{W}}_F / N$ is the normalized weight Frobenius norm and $\bar{\phi}$ is the mean phase deviation from the consensus phase $\psi$. In this projection, the composite Lyapunov function (Eq.~\ref{eq:lyapunov}) reduces under a mean-field approximation to:
\begin{equation}
\label{eq:app_lyapunov}
V(w, \bar{\phi}) = -\frac{K}{2}\cos^2(\bar{\phi}) + \frac{\lambda}{2} w^2,
\end{equation}
where the $\cos^2(\bar{\phi})$ term arises from approximating $\frac{1}{N^2}\sum_{i,j} \cos(\theta_i - \theta_j) \approx \cos^2(\bar{\phi})$ for phases clustered around the mean with deviation $\bar{\phi}$. The parameters are $K = 2.0$ and $\lambda = 0.3$. A small Gaussian perturbation is added to the rendered surface for visual realism; it does not affect the theoretical analysis or trajectory dynamics.

Trajectories are computed by numerically integrating the full coupled Kuramoto--Hebbian system with $N = 20$ oscillators for 600 steps ($\Delta t = 0.02$), using the smooth sigmoid gate ($\beta = 15$, $r_c = 0.5$, $\eta = 0.01$, $\gamma = 0.001$). Eight trajectories are initialized from random phase configurations $\theta_i \sim \text{Uniform}[0, 2\pi)$ and random symmetric weight matrices ($W_{ij} \sim \mathcal{N}(0, 0.01)$), then projected onto the $(w, \bar{\phi}, V)$ space at each step. Intrinsic frequencies are drawn from $\mathcal{N}(0, 0.25)$ and centered. Random seed: 42.

\bibliography{references}

\end{document}